\documentclass[letterpaper]{article} 
\usepackage{aaai2027}  
\usepackage[hyphens]{url}  
\usepackage{graphicx} 
\usepackage{natbib}  
\usepackage{caption} 
\usepackage{algorithm}
\usepackage{algorithmic}

\usepackage{amsfonts}
\usepackage{amsmath}
\usepackage{amsthm}
\usepackage{amssymb}

\newtheorem{theorem}{Theorem}

\newtheorem{lemma}{Lemma}[section]
\newtheorem{corollary}[lemma]{Corollary}

\usepackage{newfloat}
\usepackage{listings}
\DeclareCaptionStyle{ruled}{labelfont=normalfont,labelsep=colon,strut=off} 
\floatstyle{ruled}
\newfloat{listing}{tb}{lst}{}
\floatname{listing}{Listing}

\usepackage{booktabs}

\title{Event-Structured Physics-Informed Neural Networks for Differentiable Critical Clearing Boundaries}
\author{
    Baoli Hao\textsuperscript{\rm 1},
    Chenxi Hu\textsuperscript{\rm 2},
    Ming Zhong\textsuperscript{\rm 3},
   Ren Wang\textsuperscript{\rm 4}
}
\affiliations{
    \textsuperscript{\rm 1} Department of Applied Mathematics, Illinois Institute of Technology, 
Chicago, IL 60616, USA \\ { bhao2@hawk.illinoistech.edu}
\\
\textsuperscript{\rm 2} Department of Computer Science, The Hang Seng University of Hong Kong, Sha Tin, Hong Kong \\ {chenxihu@hsu.edu.hk}
\\
\textsuperscript{\rm 3} Department of Mathematics, University of Houston, Houston TX 77204, USA \\
{ mzhong3@central.uh.edu}
\\
 \textsuperscript{\rm 4} Department of Electrical and Computer Engineering, Illinois Institute of Technology, 
Chicago, IL 60616, USA \\ { rwang74@illinoistech.edu} 

}

\nocopyright
\begin{document}

\maketitle

\begin{abstract}

Transient-stability assessment determines whether a power system can recover after a disturbance and is therefore essential to preventing generator trips and cascading outages. A key metric is the critical clearing time (CCT), which specifies the maximum time available to clear a fault before synchronism is lost.  Reliable CCT estimation is challenging because complicated fault-clearing dynamics require repeated simulations over many fault severities and clearing times.
 We propose an event-structured physics-informed neural network (ES-PINN) that aligns its representation with the pre-fault, fault-on, and post-clearing swing dynamics and enforces exact state chaining across event interfaces. A smooth trajectory-induced stability margin defines a differentiable approximation of the CCT boundary, enabling accurate boundary extraction, local sensitivity analysis, and optional direct CCT prediction through a distilled readout. We further prove a local residual-to-trajectory-to-CCT error
estimate, in which exact event chaining eliminates separate
state-interface defect terms. Experiments on IEEE 9-, 14-, and 30-bus systems show that ES-PINN consistently improves held-out trajectory and stability-boundary accuracy over matched neural-surrogate baselines across mechanical and electrical contingencies with multiple clearing configurations. Additional full-network DAE validation, multi-fault experiments, and runtime analyses further demonstrate the effectiveness and computational efficiency of the proposed framework.
 
\end{abstract}


\section{Introduction}

Transient stability assessment asks whether a power system preserves
synchronism after a large disturbance and its subsequent clearing. A
contingency induces a family of trajectories indexed by fault severity
and clearing time: fault application and clearing change the active
network and, in some cases, the mechanical input. The critical clearing
time (CCT), defined by the first stable-to-unstable transition as
clearing time increases, is therefore an operationally meaningful
boundary rather than a property of any single trajectory~\cite{yorino2010new}. Estimating
this boundary conventionally requires repeated time-domain simulations
over a fault--clearing grid
\cite{misyris2020physics,stiasny2023physics,moya2023dae}. This burden motivates a surrogate for the full parameterized trajectory family. Physics-informed neural networks (PINNs) offer an appealing way to
construct such a surrogate by incorporating governing equations into
trajectory learning \cite{raissi2019physics}. However,  fault--clearing trajectories introduce a structural difficulty
beyond temporal propagation in time-dependent PINNs \cite{wang2021understanding,hao2026stability}. The dynamics consist
of pre-fault, fault-on, and post-clearing regimes with different active
operators; the state is continuous at switching, but its derivative
generally is not. Moreover, the clearing interface moves with the
queried clearing time. A monolithic smooth surrogate must represent all
regimes simultaneously, whereas phase-wise surrogates coupled through
soft interface penalties can leave state defects that propagate into
the post-clearing response. Because the CCT is extracted from a
trajectory-induced stability transition, even small interface defects
can ultimately shift the estimated stability boundary.

We address this setting with an event-structured physics-informed
neural network (ES-PINN) for hybrid fault--clearing dynamics with
known event rules. ES-PINN
aligns its representation with the physical event sequence, using
separate pre-fault, fault-on, and post-clearing components that
are hard chained through the initial condition and phase interfaces. Thus, the
post-clearing state is obtained directly from the fault-on prediction
at the queried clearing time, preserving a differentiable trajectory
path with respect to that input. The primary output is the full
parameterized hybrid trajectory family, from which a smooth
rotor-angle-separation margin defines the stability boundary and the
 CCT. Implicit differentiation then yields the local CCT
sensitivity, while a post-hoc scalar head provides an optional direct
readout. Figure~\ref{fig:Framework} summarizes this trajectory-to-boundary
workflow.

\begin{figure*}[t]
    \centering
    \includegraphics[width=\linewidth]
    {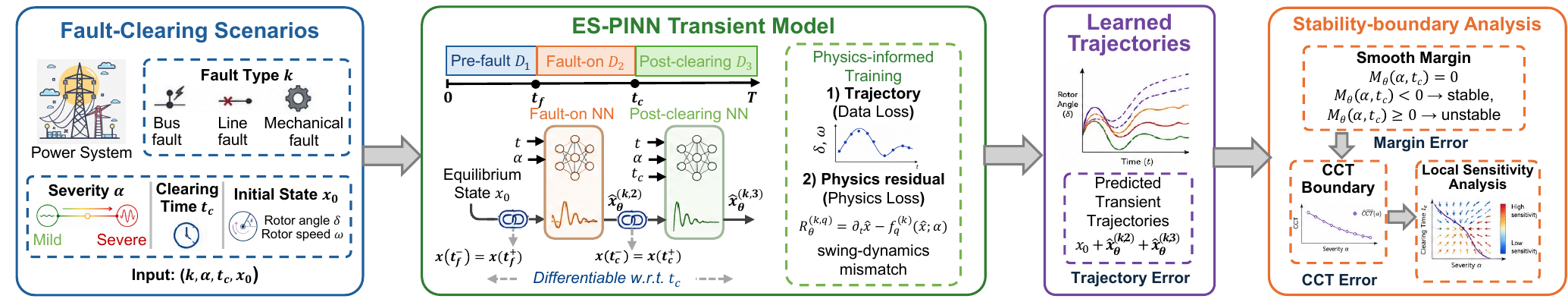}
    \caption{Flowchart of the ES-PINN Framework. Phase-wise networks are hard chained at the fault and clearing time and trained in a physics-informed manner. The learned trajectories induce a smooth margin whose zero level set gives the CCT boundary.}
    \label{fig:Framework}
\end{figure*}
Our main contributions are fourfold.
\textbf{(i)} We introduce ES-PINN, an event-structured surrogate that
hard chains phase-wise dynamics across an input-dependent clearing
interface.
\textbf{(ii)} We extract a differentiable CCT boundary and its local
sensitivity from the learned trajectory family, with an optional scalar
readout.
\textbf{(iii)} We prove a local theorem that provides a
residual-to-trajectory-to-CCT error estimate, linking physics residuals
and interface errors to trajectory and stability-boundary accuracy.
\textbf{(iv)} We evaluate held-out trajectories and CCT boundaries on
IEEE 9-, 14-, and 30-bus systems under mechanical and electrical faults,
multiple clearing actions, and limited-data multi-fault sharing.


\subsection{Related Work}

PINNs embed differential-equation residuals into neural-network
training \cite{raissi2019physics}, but their performance on
time-dependent problems can be limited by gradient imbalance and weak
propagation of initial information \cite{wang2021understanding}.
Recent enhancements to PINNs for time-dependent problems include exact
initial-condition enforcement, which improves training stability for
stiff systems~\cite{hao2026stability}, domain decomposition with
separate local networks in XPINN~\cite{jagtap2020extended}, and exact
temporal-continuity enforcement across sequential, fixed time
segments~\cite{roy2024exact}. Global operator-learning approaches,
including DeepONet and its physics-informed extension, instead learn
families of solutions through a shared operator
representation~\cite{lu2021learning,wang2021learning}.
ES-PINN differs in that its
clearing interface is an input-dependent event location: the
post-clearing state is generated from the fault-on trajectory at the
queried clearing time, while the  swing dynamics switch
across that interface.

Within power systems, PINNs have been used for dynamic state and
parameter estimation \cite{misyris2020physics}, time-domain simulation
\cite{stiasny2023physics}, DAE trajectory learning
\cite{moya2023dae}, and component-wise DAE simulation
\cite{stiasny2024pinnsim}. PINN-based CCT studies have combined learned
surrogates with optimization formulations
 \cite{misyris2021capturing,de2025critical}. 
In contrast, ES-PINN obtains CCT sensitivities by implicitly differentiating the learned trajectory-induced stability boundary rather than by solving the trajectory-sensitivity equations in~\cite{mishra2020critical}.
For heterogeneous
contingency families, we use known hard routing to share
event-structured representations, drawing on the general
mixture-of-experts principle \cite{shazeer2017outrageously}.


\section{Problem Setup}
\label{sec:problem}

We study transient-stability assessment for power-system contingencies
through reduced swing-equation dynamics. A contingency member
\(k\in\mathcal K\) specifies a known fault mechanism and location, its
clearing action, and the resulting pre-fault, fault-on, and
post-clearing network and input models. We represent the three
phase-wise operator pairs as
\(
c_k
=
(
(
Y^{(k,q)}(\alpha),
P_m^{(k,q)}(\alpha)
)
)_{q=1}^{3},
\)
where \(q=1,2,3\) denote the pre-fault, fault-on, and post-clearing
phases, respectively,
\(Y^{(k,q)}(\alpha)\in\mathbb C^{n_g\times n_g}\) is the reduced
generator-admittance matrix, and
\(P_m^{(k,q)}(\alpha)\in\mathbb R^{n_g}\) is the mechanical-power input
vector. Here, \(\alpha\in\mathcal A_k\subset\mathbb R\) is the
admissible severity parameter for contingency \(k\). The pre-fault and
post-clearing operators are independent of \(\alpha\) in the
configurations considered here. We instantiate \(c_k\) through several power-system contingency
families. Generator-bus and load-bus electrical faults are modeled by
adding an \(\alpha\)-scaled shunt admittance at the selected bus and
Kron-reducing the resulting network. A transmission-line fault is
modeled by inserting a virtual fault node at fractional line position
\(\lambda^{(k)}\in(0,1)\), applying an \(\alpha\)-scaled shunt at that
node, and Kron-reducing the augmented network. For a mechanical-power
perturbation at generator \(g\), the fault-on input is
\(
P_{m,g}^{(k,2)}(\alpha)
=
(
1+s_{\mathrm{mech}}^{(k)}\alpha
)
P_{m,g}^{(k,1)},
\)
where \(s_{\mathrm{mech}}^{(k)}\) is a dimensionless coefficient
controlling the relative change in the selected mechanical-power input;
the remaining generator inputs are unchanged. In each case,
\(\alpha=0\) recovers the pre-fault operator during the fault-on phase.
The clearing action may restore the pre-fault topology, remove a
selected transmission line, trip feeders incident to a faulted bus, or
restore a perturbed mechanical input. Thus, \(k\) identifies the fault mechanism, location, and clearing action, remaining fixed in the single-contingency setting and becoming
a routed discrete input in the multi-fault extension.  Let
\(
t_c\in\mathcal T
=
[t_{c,\min},t_{c,\max}]
\subset(t_f,T)
\)
denote the queried clearing time. As illustrated in
Fig.~\ref{fig:Framework}, the fault is applied at \(t_f\), cleared at
\(t_c\), and assessed over the horizon \(t\in[0,T]\). For a queried pair
\((\alpha,t_c)\in\mathcal A_k\times\mathcal T\), the reduced dynamic
state of a system with \(n_g\) generators is
\(
x^{(k)}(t;\alpha,t_c)
=
\left(
\delta^{(k)}(t;\alpha,t_c),
\omega^{(k)}(t;\alpha,t_c)
\right)
\in\mathbb R^{2n_g},
\)
where \(\delta\) is the rotor-angle vector and \(\omega\) is the
absolute per-unit rotor-speed vector. Within each event phase, the
reduced swing equations are
\begin{equation}
\begin{aligned}
\dot{\delta}
&=
\omega_b(\omega-\omega_0),
\\
\dot{\omega}
&=
(2H)^{-1}
\left[
P_m
-
P_e(\delta;Y)
-
D(\omega-\omega_0)
\right],
\end{aligned}
\label{eq:red_swing}
\end{equation}
where \(\omega_b\) is the base electrical angular frequency,
\(\omega_0\) is the synchronous-speed vector, and
\(H=\operatorname{diag}(H_1,\ldots,H_{n_g})\) and
\(D=\operatorname{diag}(D_1,\ldots,D_{n_g})\) are the inertia and
damping matrices, respectively. Writing
\(
Y_{ij}=|Y_{ij}|e^{\mathrm{i}\angle Y_{ij}},
\)
where \(\angle Y_{ij}\) is the phase angle of the reduced-admittance
entry, and letting \(E_i\) denote the constant internal-voltage
magnitude of generator \(i\), the electrical air-gap power is
\begin{equation}
P_{e,i}(\delta;Y)
=
\sum_{j=1}^{n_g}
E_iE_j |Y_{ij}|
\cos\!\left(
\delta_i-\delta_j-\angle Y_{ij}
\right).
\label{eq:Pei}
\end{equation}

Let
\(
(\tau_0,\tau_1,\tau_2,\tau_3)
=
(0,t_f,t_c,T).
\)
The hybrid dynamics can be written compactly as
\(
\dot{x}^{(k)}(t;\alpha,t_c)
=
f_q^{(k)}
\left(
x^{(k)}(t;\alpha,t_c);\alpha
\right),
\;
t\in[\tau_{q-1},\tau_q),
\)
where \(q=1,2,3\), and \(f_q^{(k)}\) is induced by the active operator
pair
\(
\bigl(
Y^{(k,q)}(\alpha),
P_m^{(k,q)}(\alpha)
\bigr).
\)
 We assume a
pre-contingency operating point
\(
x_0=(\delta_0,\omega_0)
\)
satisfying
\(
f_1^{(k)}(x_0)=0;
\)
hence, the pre-fault trajectory remains at \(x_0\) until fault
application.

The clearing time determines when the dynamics switch from the
fault-on to the post-clearing vector field; it does not enter the
pre-fault or fault-on dynamics. Therefore, for any
\begin{equation} x^{(k)}(t;\alpha,t_c^{(1)}) = x^{(k)}(t;\alpha,t_c^{(2)}), \; t< \min\{ t_c^{(1)},t_c^{(2)} \}. \label{eq:physical-nonanticipativity} \end{equation}
The generator dynamic state is also continuous across both switching
events:
\begin{equation}
\begin{aligned}
x^{(k)}(t_f^-;\alpha,t_c)
&=
x^{(k)}(t_f^+;\alpha,t_c),
\\
x^{(k)}(t_c^-;\alpha,t_c)
&=
x^{(k)}(t_c^+;\alpha,t_c).
\end{aligned}
\label{eq:state-continuity}
\end{equation}
Thus, the fault-on trajectory is non-anticipative and the event states
are continuous, although their time derivatives generally jump when
the active network or mechanical input changes. This event structure
motivates the phase-wise surrogate introduced below. We consider one
fault-application event and one clearing event per trajectory, with
known phase-wise operators and continuous generator states; unknown,
repeated, or state-reset switching mechanisms are outside the present
scope.

We measure loss of synchronism through the maximum pairwise rotor-angle
separation
\begin{equation}
d^{(k)}(t;\alpha,t_c)
\!=
\!\!\!\max_{1\leq i<j\leq n_g}\!
\left|
\delta_i^{(k)}(t;\alpha,t_c)
\!-\!
\delta_j^{(k)}(t;\alpha,t_c)
\right|,
\label{eq:angle-separation}
\end{equation}
which is invariant to a common shift of all rotor angles. The
continuous-time physical stability margin is
\begin{equation}
m^{(k)}(\alpha,t_c)
=
\max_{t\in[0,T]}
d^{(k)}(t;\alpha,t_c)
-
\Delta_{\max},
\label{eq:physical-hard-margin}
\end{equation}
where \(\Delta_{\max}>0\) is the prescribed maximum admissible
pairwise rotor-angle separation. A negative margin indicates
finite-horizon stability, whereas a nonnegative margin indicates loss
of synchronism over the assessment horizon \(T\), rather than
asymptotic instability beyond \(T\).

For a severity satisfying
\(
m^{(k)}(\alpha,t_{c,\min})<0,
\)
we define the critical clearing time as the right endpoint of the
initial stable branch:
\begin{equation}
\begin{aligned}
\mathrm{CCT}_k(\alpha)
=
\sup
\{
t_c\in\mathcal T\!\!:
&m^{(k)}(\alpha,s)<0,
\;
\forall s\in[t_{c,\min},t_c]
\}.
\end{aligned}
\label{eq:cct-definition}
\end{equation}
When the margin is continuous and the crossing is finite, this is the
first clearing time at which the margin becomes nonnegative, so any
later re-stabilized branch does not alter the reported CCT. If all
queried clearing times remain stable, the CCT is reported as
right-censored at \(t_{c,\max}\); if the trajectory is already unstable
at \(t_{c,\min}\), it is reported as left-censored at the lower
endpoint.

\section{Event-Structured PINN}
\label{sec:ecpinn}

The clearing time \(t_c\) is both an event location and a query
coordinate. ES-PINN assigns separate components to the pre-fault,
fault-on, and post-clearing domains while coupling adjacent components
through their physical event states. Specifically, these domains are
\(
\mathcal D_1=[0,t_f),
\)
\(
\mathcal D_2=[t_f,t_c),
\)
and
\(
\mathcal D_3=[t_c,T],
\)
respectively:
\begin{equation}
\widehat{x}^{(k)}_\theta(t;\alpha,t_c)
=
\begin{cases}
x_0,
& t\in\mathcal D_1,\\
\widehat{x}^{(k,2)}_\theta(t;\alpha),
& t\in\mathcal D_2,\\
\widehat{x}^{(k,3)}_\theta(t;\alpha,t_c),
& t\in\mathcal D_3.
\end{cases}
\label{eq:ecpinn-piecewise}
\end{equation}
Here
\(
\widehat{x}^{(k,q)}_\theta
=
(
\widehat{\delta}^{(k,q)}_\theta,
\widehat{\omega}^{(k,q)}_\theta
)
\)
for \(q\in\{2,3\}\).

For the two learned phases, let
\(
t_{\mathrm{bd}}^{(2)}=t_f,
\;
t_{\mathrm{bd}}^{(3)}=t_c,
\;
s_q=t-t_{\mathrm{bd}}^{(q)}.
\)
Their event states are hard chained as
\begin{equation}
\begin{aligned}
\widehat{x}^{(k,2)}_\theta(t_f;\alpha)
&=
x_{\mathrm{bd}}^{(k,2)}
=
x_0,
\\
\widehat{x}^{(k,3)}_\theta(t_c;\alpha,t_c)
&=
x_{\mathrm{bd}}^{(k,3)}(\alpha,t_c)
=
\widehat{x}^{(k,2)}_\theta(t_c;\alpha).
\end{aligned}
\label{eq:ecpinn-hard-chain}
\end{equation}
Thus, ES-PINN satisfies the state-continuity condition in
Eq.~\eqref{eq:state-continuity} by construction. Because the fault-on
component receives only \((t,\alpha)\), the physical
non-anticipativity condition in
Eq.~\eqref{eq:physical-nonanticipativity} also holds independently of
training accuracy. Moreover, evaluating the fault-on component at the
queried \(t_c\) preserves the differentiable dependency of the
post-clearing trajectory on the clearing time. The phase-3 boundary
state and its local physical coefficients therefore remain in the same
computational graph.

Write
\(
x_{\mathrm{bd}}^{(k,q)}
=
(
\delta_{\mathrm{bd}}^{(k,q)},
\omega_{\mathrm{bd}}^{(k,q)}
)
\)
and partition the phase-\(q\) vector field as
\(
f_q^{(k)}
=
(
f_{\delta,q}^{(k)},
f_{\omega,q}^{(k)}
).
\)
Let \(J_\delta f_{\omega,q}^{(k)}\) and
\(J_\omega f_{\omega,q}^{(k)}\) denote the Jacobians of the
rotor-speed vector field with respect to \(\delta\) and \(\omega\),
respectively. Differentiating the active swing dynamics along the
trajectory gives the local boundary coefficients
\begin{small}
\begin{equation}
\begin{aligned}
v_{\mathrm{bd}}^{(k,q)}
&=
f_{\delta,q}^{(k)}
\left(
x_{\mathrm{bd}}^{(k,q)};\alpha
\right),
\quad
a_{\mathrm{bd}}^{(k,q)}
=
f_{\omega,q}^{(k)}
\left(
x_{\mathrm{bd}}^{(k,q)};\alpha
\right),
\\
b_{\mathrm{bd}}^{(k,q)}
&=
J_\delta f_{\omega,q}^{(k)}
\left(
x_{\mathrm{bd}}^{(k,q)};\alpha
\right)
v_{\mathrm{bd}}^{(k,q)}
+
J_\omega f_{\omega,q}^{(k)}
\left(
x_{\mathrm{bd}}^{(k,q)};\alpha
\right)
a_{\mathrm{bd}}^{(k,q)},
\end{aligned}
\label{eq:ecpinn-taylor-coefficients}
\end{equation}
\end{small}
for $ q\in\{2,3\}.$
These coefficients give the phase-local swing-equation expansion
\begin{equation}
\begin{aligned}
T_{\delta}^{(k,q)}(s_q)
&=
\delta_{\mathrm{bd}}^{(k,q)}
+
v_{\mathrm{bd}}^{(k,q)}s_q
+
\frac{1}{2}
\omega_ba_{\mathrm{bd}}^{(k,q)}s_q^2
+
\frac{1}{6}
\omega_bb_{\mathrm{bd}}^{(k,q)}s_q^3,
\\
T_{\omega}^{(k,q)}(s_q)
&=
\omega_{\mathrm{bd}}^{(k,q)}
+
a_{\mathrm{bd}}^{(k,q)}s_q
+
\frac{1}{2}
b_{\mathrm{bd}}^{(k,q)}s_q^2,
\end{aligned}
\label{eq:ecpinn-local-taylor}
\end{equation}
with
\(
T^{(k,q)}
=
(
T_{\delta}^{(k,q)},
T_{\omega}^{(k,q)}
).
\)

Let
\(
\zeta_2=(t,\alpha)
\)
and
\(
\zeta_3=(t,\alpha,t_c)
\)
denote the causal inputs of the fault-on and post-clearing corrections,
respectively. We blend the local physical expansion with a neural
correction through the Taylor-gated ansatz $g_q(s_q)
\!=\!
1-
\exp[
-
\left(
\frac{s_q}{\ell_q}
\right)^2
],$
\begin{equation}
\begin{aligned}
\widehat{x}^{(k,q)}_\theta
=
&x_{\mathrm{bd}}^{(k,q)}
+
g_q(s_q)
A^{(k,q)}_\theta(\zeta_q)\\
&+
\left(
1-g_q(s_q)
\right)
\left(
T^{(k,q)}(s_q)
-
x_{\mathrm{bd}}^{(k,q)}
\right),
\end{aligned}
\label{eq:ecpinn-taylor-gated-ansatz}
\end{equation}
where $q\!\in\!\{2,3\},$ \(\ell_q\!>\!0\) is a fixed time scale independent of
\(\alpha\), \(t_c\), and the phase duration. Because
\(
g_q(0)=g_q'(0)=0,
\)
the ansatz fixes both the event state and the first right-hand time
derivative prescribed by the active phase dynamics. 
This preserves the physical jump in the time derivative at switching while preventing the neural correction from altering the initial derivative of the new phase. The Taylor expansion acts as a local physical inductive bias rather than a higher-order hard constraint, so the neural correction can adjust the higher-order derivatives away from the event.

Training combines trajectory supervision with phase-wise
swing-equation residuals. For \(q\!\in\!\{2,3\}\), define the 
residual
\begin{equation}
R_\theta^{(k,q)}
\! =\!
\partial_t
\widehat{x}^{(k,q)}_\theta
\!-\!
f_q^{(k)}
(
\widehat{x}^{(k,q)}_\theta;\alpha
)
\!=\!
(
R_{\delta,\theta}^{(k,q)},
R_{\omega,\theta}^{(k,q)}
).
\label{eq:ecpinn-phase-residuals}
\end{equation}
The training objective is
\begin{equation}
\mathcal L(\theta)
=
\mathcal L_{\mathrm{data}}
+
\lambda_\delta
\sum_{q=2}^{3}
\mathcal L_{\mathrm{res},\delta}^{(q)}
+
\lambda_\omega
\sum_{q=2}^{3}
\mathcal L_{\mathrm{res},\omega}^{(q)}.
\label{eq:ecpinn-loss}
\end{equation}
where \(\mathcal L_{\mathrm{data}}\) is the mean-squared trajectory-data loss, and the residual losses are mean-squared collocation residuals. The pre-fault residual vanishes identically
because \(\mathcal D_1\) is represented by the exact equilibrium
\(x_0\). Likewise, Eq.~\eqref{eq:ecpinn-hard-chain} imposes the initial
and interface states by construction, so no separate soft initial or
interface penalty is required.

\paragraph{Hard-routed multi-fault extension.}
For multiple known contingency members \(k\in\mathcal K\), we adopt a
deterministically routed mixture-of-experts architecture, inspired by
the  shared-representation and expert-specialization principle
~\cite{shazeer2017outrageously}.
Let \(H_\theta\) be a shared trunk, \(N_q\) a phase-specific neck, and
\(E_{k,q}\) a fault- and phase-specific expert. The correction in
Eq.~\eqref{eq:ecpinn-taylor-gated-ansatz} becomes
\begin{equation}
A^{(k,q)}_\theta(\zeta_q)
=
E_{k,q}
\left[
N_q
\left(
H_\theta(\zeta_q,q)
\right)
\right],
\quad q\in\{2,3\}.
\label{eq:ecpinn-hard-routed-moe}
\end{equation}
The known contingency index \(k\) selects exactly one expert during
both training and inference, with no soft expert averaging. The shared
trunk and phase-specific necks capture common transient structure,
while the routed heads retain fault-specific fault-on and
post-clearing responses. The exact pre-fault state bypasses the routed
neural architecture.


\section{Boundary Extraction and Deployment}
\label{sec:boundary}

We induce the stability boundary directly from the learned rotor-angle
trajectories, without training a separate stability classifier. Let
\(
\mathcal T_h=\{t_\ell\}_{\ell=1}^{N_t}
\)
denote the trajectory-time samples used for boundary evaluation, with
maximum spacing \(h_t\). For \(\beta>0\), define
\(
\operatorname{smax}_{\beta}\{z_m\}
=
\beta^{-1}\log\bigg(\sum_m\exp(\beta z_m)\bigg),
\)
whose error relative to \(\max_m z_m\) is at most
\(\log N/\beta\). We also use
\(
|z|_\varepsilon=\sqrt{z^2+\varepsilon^2},
\)
which satisfies
\(
0\leq |z|_\varepsilon-|z|\leq\varepsilon.
\)

Applying these approximations jointly over the trajectory-time samples
and unordered generator pairs gives
\begin{equation}
\mathcal G_{\beta,\varepsilon,h_t}[\delta]
=
\operatorname{smax}_{\beta}
\left\{
\left|
\delta_i(t_\ell)-\delta_j(t_\ell)
\right|_\varepsilon
:
t_\ell\in\mathcal T_h,\ i<j
\right\},
\label{eq:smooth-separation}
\end{equation}
where
\(
N=N_t\binom{n_g}{2}
\)
is the number of aggregated terms. The sampled-smooth margins of the
reduced ODE trajectory and its learned approximation are
\begin{equation}
\begin{aligned}
M^{(k)}(\alpha,t_c)
&=
\mathcal G_{\beta,\varepsilon,h_t}
\left[
\delta^{(k)}(\cdot;\alpha,t_c)
\right]
-
\Delta_{\max},
\\
M_\theta^{(k)}(\alpha,t_c)
&=
\mathcal G_{\beta,\varepsilon,h_t}
\left[
\widehat{\delta}^{(k)}_\theta
(\cdot;\alpha,t_c)
\right]
-
\Delta_{\max}.
\end{aligned}
\label{eq:smooth-margin}
\end{equation}
Both margins are induced by rotor-angle trajectories rather than by an
independently trained stability classifier.

If all pairwise rotor-angle separations are \(K_t\)-Lipschitz in time
on the relevant trajectory family, the ODE sampled-smooth margin
approximates the continuous-time hard margin in
Eq.~\eqref{eq:physical-hard-margin} according to
\begin{equation}
\left|
M^{(k)}(\alpha,t_c)
-
m^{(k)}(\alpha,t_c)
\right|
\leq
\varepsilon
+
\frac{\log N}{\beta}
+
K_t h_t.
\label{eq:smooth-hard-margin-error}
\end{equation}
The three terms quantify the smooth-absolute, smooth-maximum, and
temporal-sampling errors, respectively.

For each severity \(\alpha\), we evaluate
\(M_\theta^{(k)}(\alpha,t_c)\) over an ordered clearing-time grid
\(
t_{c,1}<\cdots<t_{c,N_c}.
\)
We select the first adjacent pair satisfying
\(
M_\theta^{(k)}(\alpha,t_{c,r})<0,
\;
M_\theta^{(k)}(\alpha,t_{c,r+1})\geq0,
\)
which brackets the initial stable-to-unstable transition. The learned
margin is then reevaluated at continuous clearing-time queries within
this bracket. Let \(t_\theta^{\circ,(k)}(\alpha)\) denote the selected
continuous zero, and let
\(\widehat{\mathrm{CCT}}_\theta^{(k)}(\alpha)\) denote its numerically
root-refined approximation:
\begin{equation}
\begin{aligned}
M_\theta^{(k)}
\left(
\alpha,t_\theta^{\circ,(k)}(\alpha)
\right)
&=
0,
\\
\left|
\widehat{\mathrm{CCT}}_\theta^{(k)}(\alpha)
-
t_\theta^{\circ,(k)}(\alpha)
\right|
&\leq
\tau_{\mathrm{root}}.
\end{aligned}
\label{eq:learned-margin-root}
\end{equation}
Censoring follows Sec.~\ref{sec:problem}, and any later re-stabilized
branch is excluded by selecting the first transition.

\begin{theorem}[Uniform residual-to-hard-CCT control]
\label{thm:end-to-end-cct}
Fix a contingency member \(k\), and let
\(\mathcal A_0\subseteq\mathcal A_k\) be a compact severity interval on
which the reduced-model hard CCT is finite and uncensored. Assume that
the phase-wise vector fields are uniformly Lipschitz on a common
bounded trajectory tube. Let
\(\mathcal R_2^{(k)}\) and \(\mathcal R_3^{(k)}\) denote the uniform
continuous \(L^1\) residual norms of the fault-on and post-clearing
phases associated with Eq.~\eqref{eq:ecpinn-phase-residuals}. Then
there exist constants \(C_2,C_3>0\), depending only on the phase-wise
Lipschitz constants and maximum phase durations, such that
\begin{small}
\begin{equation}
\sup_{\substack{
\alpha\in\mathcal A_0,\,
t_c\in\mathcal T\\
t\in[0,T]
}}
\left\|
\widehat{x}^{(k)}_\theta(t;\alpha,t_c)
-
x^{(k)}(t;\alpha,t_c)
\right\|
\leq
C_2\mathcal R_2^{(k)}
+
C_3\mathcal R_3^{(k)}.
\label{eq:trajectory-control}
\end{equation}
\end{small}

Assume further that the initial hard-CCT branch is uniformly interior,
isolated from earlier crossings, and transverse with margin-growth
constant \(\mu>0\). If the combined trajectory, margin-approximation,
and clearing-grid errors are sufficiently small to preserve this
initial branch, then
\begin{small}
\begin{equation}
\begin{aligned}
&\sup_{\alpha\in\mathcal A_0}
\left|
\widehat{\mathrm{CCT}}_\theta^{(k)}(\alpha)
-
\mathrm{CCT}_k(\alpha)
\right|
\\
&\quad\leq
\frac{
2\left(
C_2\mathcal R_2^{(k)}
+
C_3\mathcal R_3^{(k)}
\right)
+
\varepsilon
+
\log(N)/\beta
+
K_t h_t
}{
\mu
}
+
\tau_{\mathrm{root}}.
\end{aligned}
\label{eq:end-to-end-cct-control}
\end{equation}
\end{small}

\end{theorem}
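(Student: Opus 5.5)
We prove the two parts in sequence: first a Grönwall-type trajectory bound exploiting the hard chaining, then a transversality argument that turns margin perturbations into CCT perturbations.

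\textbf{Trajectory bound.} Write \(e(t)=\widehat{x}^{(k)}_\theta(t;\alpha,t_c)-x^{(k)}(t;\alpha,t_c)\). On \(\mathcal D_1\) both trajectories equal \(x_0\), so \(e\equiv 0\). On \(\mathcal D_2\), the hard chain in Eq.~\eqref{eq:ecpinn-hard-chain} gives \(e(t_f)=0\). Subtracting the true dynamics from the residual identity Eq.~\eqref{eq:ecpinn-phase-residuals} yields
\[
\dot e = f_2^{(k)}(\widehat{x})-f_2^{(k)}(x)+R^{(k,2)}_\theta .
\]
By the Lipschitz bound \(L_2\) on the common tube, \(\|e(t)\|\le \int_{t_f}^t L_2\|e\|+\|R^{(k,2)}_\theta\|\). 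Grönwall then gives
\[
\sup_{\mathcal D_2}\|e\|\le e^{L_2(t_{c,\max}-t_f)}\mathcal R_2^{(k)} .
\]
On \(\mathcal D_3\), the hard chain sets \(\widehat{x}^{(k,3)}(t_c)=\widehat{x}^{(k,2)}(t_c)\), and physical continuity Eq.~\eqref{eq:state-continuity} gives \(x(t_c^+)=x(t_c^-)\). Hence \(e(t_c^+)=e(t_c^-)\), with no separate interface defect term. This is exactly where exact chaining is used.

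Repeating Grönwall on \([t_c,T]\) with constant \(L_3\) gives
\[
\|e\|\le e^{L_3(T-t_{c,\min})}\bigl(\|e(t_c)\|+\mathcal R_3^{(k)}\bigr).
\]
Taking \(C_2=e^{L_2(t_{c,\max}-t_f)+L_3(T-t_{c,\min})}\) and \(C_3=e^{L_3(T-t_{c,\min})}\), the bound is uniform in \((\alpha,t_c,t)\), which proves Eq.~\eqref{eq:trajectory-control}. One technical caveat: the learned trajectory must stay in the tube for the Lipschitz constant to apply. I would handle this with a standard continuation argument under the smallness assumption, or simply include it in the tube hypothesis.

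\textbf{Margin bound.} The map \(\delta\mapsto\mathcal G_{\beta,\varepsilon,h_t}[\delta]\) is 1-Lipschitz in each pairwise difference, because smax and \(|\cdot|_\varepsilon\) are 1-Lipschitz in the sup norm. A pairwise difference \(\delta_i-\delta_j\) changes by at most \(2\sup\|e\|\). Therefore
\[
|M_\theta^{(k)}-M^{(k)}|\le 2(C_2\mathcal R_2^{(k)}+C_3\mathcal R_3^{(k)}).
\]
Adding Eq.~\eqref{eq:smooth-hard-margin-error} gives a uniform bound
\[
|M_\theta^{(k)}-m^{(k)}|\le \eta:=2(C_2\mathcal R_2^{(k)}+C_3\mathcal R_3^{(k)})+\varepsilon+\log N/\beta+K_th_t .
\]

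\textbf{CCT perturbation (main obstacle).} I interpret transversality with constant \(\mu\) as a two-sided growth condition. Near \(t^\star=\mathrm{CCT}_k(\alpha)\), \(m^{(k)}(\alpha,s)\le -\mu(t^\star-s)\) for \(s<t^\star\), and \(m^{(k)}(\alpha,s)\ge\mu(s-t^\star)\) just after. Isolation from earlier crossings adds that \(m\) stays bounded away from zero, say below \(-\eta\), on the earlier part of the stable branch.

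Now take any \(s<t^\star-\eta/\mu\). Then \(M_\theta<m+\eta\le 0\), so \(M_\theta\) is still negative there. For \(s>t^\star+\eta/\mu\) within the interior window, \(M_\theta\ge m-\eta>0\). The "sufficiently small errors" hypothesis ensures that the clearing grid resolves the bracket, so the first sign change of \(M_\theta\) is the one selected, and it lies in \([t^\star-\eta/\mu,\,t^\star+\eta/\mu]\). Hence \(|t_\theta^{\circ,(k)}(\alpha)-t^\star|\le\eta/\mu\).

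Adding \(\tau_{\mathrm{root}}\) via Eq.~\eqref{eq:learned-margin-root} and taking the supremum over the compact \(\mathcal A_0\) gives Eq.~\eqref{eq:end-to-end-cct-control}. The delicate point is making precise that the bracket selection picks the perturbed root near the initial branch rather than a spurious earlier one. This is exactly what the uniform interiority, isolation, and branch-preservation hypotheses are meant to supply, so I would state them quantitatively in terms of \(\eta\).
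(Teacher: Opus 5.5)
Your proposal is correct and follows the paper's proof essentially step for step: Gr\"onwall on each learned phase with $e(t_f)=0$ and the error carried unchanged across $t_c$ by exact chaining (yielding the same $C_2,C_3$), the $1$-Lipschitz comparison of the two sampled-smooth margins combined with Eq.~\eqref{eq:smooth-hard-margin-error}, and an isolation/transversality sign argument on the first sign-changing grid cell. The differences are minor. The paper localizes the bracket with $2\Xi^{(k)}/\mu$ and then applies transversality at the learned root to get $\Xi^{(k)}/\mu$, whereas you read off $\eta/\mu$ directly from the sign pattern; and the paper explicitly uses the continuity of $M_\theta^{(k)}(\alpha,\cdot)$ in $t_c$ (again a consequence of exact chaining) to guarantee a zero inside the selected cell, which you take for granted.
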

Theorem~\ref{thm:end-to-end-cct} connects phase-wise residuals to
trajectory error and then to the reduced-model hard-CCT error. The
exact pre-fault representation and hard event chaining eliminate
separate initial-state, pre-fault, and state-interface defect terms.
The complete residual definitions, explicit constants, quantitative
small-error conditions, and proof are provided in the technical
supplement. The residual norms are
continuous quantities and are not automatically controlled by finite
collocation losses. Moreover, the theorem concerns the reduced
swing-model boundary, while agreement with the full-network DAE is
evaluated separately.

At a finite and uncensored learned root, assume that
\(M_\theta^{(k)}\) is locally \(C^1\) and
\(
\partial_{t_c}M_\theta^{(k)}
\left(
\alpha,t_\theta^{\circ,(k)}(\alpha)
\right)
\neq0.
\)
The implicit-function theorem then gives
\begin{equation}
\frac{
d t_\theta^{\circ,(k)}
}{
d\alpha
}
=
-
\frac{
\partial_\alpha M_\theta^{(k)}(\alpha,t_c)
}{
\partial_{t_c}M_\theta^{(k)}(\alpha,t_c)
}
\bigg|_{
t_c=t_\theta^{\circ,(k)}(\alpha)
}.
\label{eq:cct-sensitivity}
\end{equation}
Both derivatives are evaluated through the complete computational
graph at the root-refined learned boundary.

Extracting
\(\widehat{\mathrm{CCT}}_\theta^{(k)}\)
requires a clearing-time sweep followed by root refinement. For the
finite and uncensored extracted values, we train a post hoc scalar
readout
\(
\widetilde{\mathrm{CCT}}_\phi^{(k)}(\alpha)
\)
by minimizing
\begin{equation}
\min_\phi
\sum_{\alpha_i\in\mathcal I_{\mathrm{fin}}^{(k)}}
|
\widetilde{\mathrm{CCT}}_\phi^{(k)}(\alpha_i)
-
\widehat{\mathrm{CCT}}_\theta^{(k)}(\alpha_i)
|^2,
\label{eq:cct-distillation}
\end{equation}
where
\(\mathcal I_{\mathrm{fin}}^{(k)}\)
contains the finite and uncensored root-refined targets. This readout
amortizes the clearing-time sweep into a single forward query. It is
used only for deployment and does not define the stability boundary,
provide boundary sensitivities, or determine censoring.


\section{Experiments}
\label{sec:experiments}

We evaluate held-out hybrid-trajectory and CCT-boundary accuracy,
cross-system agreement with targeted full-network DAE references,
component contributions, deployment cost, and low-data multi-fault
representation sharing. The IEEE 30-bus restore-clearing benchmark
provides the primary matched comparison, while additional IEEE 9-,
14-, and 30-bus configurations test the trajectory-to-boundary
construction across fault mechanisms and clearing actions.

\subsection{Experimental Setup}
\label{subsec:experimental-setup}

We evaluate ES-PINN on the IEEE 9-, 14-, and 30-bus systems, with the steady-state bus, branch, and generator data taken from the corresponding MATPOWER cases~\cite{zimmerman2011matpower}. Unless
stated otherwise, \(\alpha\in[0,1]\), the fault is applied at
\(t_f=0.2\,\mathrm{s}\), trajectories are simulated until
\(T=2.0\,\mathrm{s}\), and supervised trajectory outputs are stored at
\(\Delta t=0.01\,\mathrm{s}\). The primary IEEE 30-bus benchmark uses
a midpoint transmission-line fault on line \((3,4)\) followed by
restoration of the pre-fault topology. The cross-system evaluation
additionally considers a mechanical-power perturbation with
restoration on IEEE 9, a midpoint line fault followed by line removal
on IEEE 14, and a load-bus shunt fault followed by feeder tripping on
IEEE 30. The mechanical perturbation uses
\(s_{\mathrm{mech}}=1.25\). Specific fault locations and clearing
actions are reported with the corresponding results.

For the primary IEEE 30-bus benchmark, all neural surrogates are
trained on a \(51\times51\) Cartesian grid over
\((\alpha,t_c)\). Evaluation uses a nested
\(101\times101\) grid after removing all \(2{,}601\) training-grid
pairs, leaving \(7{,}600\) held-out parameter pairs. Trajectory errors
and stability-map overlap are computed on this held-out set, while CCT
errors use the 50 interleaved severity values absent from the training
grid. Matched-grid reconstruction and mild severity extrapolation over
\(\alpha\in[1.02,1.20]\) are reported in the Appendix~\ref{app:additional-results}. Reduced swing-equation ODE trajectories provide the reference for the
controlled aggregate trajectory and stability-map comparisons.
Targeted full-network
DAE hard-CCT evaluations at selected severities provide an external check of the combined reduced-model and surrogate
discrepancy, but do not replace the reduced ODE in the matched
neural-surrogate comparison. All margin evaluations use
\(\Delta_{\max}=\pi\), \(\beta=50\),
\(\varepsilon=10^{-6}\), and \(h_t=1\,\mathrm{ms}\); the first
sign-changing cell is root-refined to
\(\tau_{\mathrm{root}}=10^{-5}\,\mathrm{s}\).
Complete numerical protocols are provided in the Appendix.

We compare ES-PINN with a monolithic Vanilla PINN~\cite{raissi2019physics}, an Enhanced XPINN
with soft event-interface constraints~\cite{jagtap2020extended}, and a physics-informed DeepONet~\cite{wang2021learning}.
All methods use matched trajectory supervision, physics-collocation
budgets, optimization protocols, and comparable or larger parameter
counts. We also evaluate three matched ablations that respectively
replace exact event chaining with a soft penalty, remove the prescribed
right-hand event derivatives, and remove the phase-wise residual losses.

We report rotor-angle and speed RMSE pooled over parameter pairs, time
samples, and generators. CCT accuracy is measured by the mean  absolute error relative to independently root-refined
reduced-ODE hard CCTs, using only finite and uncensored severity values.
Neural-surrogate comparisons, ablations, and multi-fault experiments
report mean and standard deviation over five seeds. Full architecture,
training, metric, distillation, and numerical-solver details are provided
in the Appendix.

\subsection{Trajectory and Stability-Boundary Accuracy}
\label{subsec:trajectory-boundary-results}

We first examine the primary IEEE 30-bus restore-clearing benchmark.
Figure~\ref{fig:case30-trajectory} shows a representative supercritical
off-grid case at
\(
\alpha=0.93
\)
and
\(
t_c=0.73\,\mathrm{s},
\)
for which the clearing time exceeds the reduced-ODE CCT. ES-PINN follows the reduced ODE and full-network DAE through fault
application, clearing, and the subsequent loss of synchronism. The corresponding rotor-angle trajectories for all generators are
shown in the Appendix~\ref{app:additional-results}.
Table~\ref{tab:case30_restore_results} provides the controlled
quantitative comparison over the complete held-out evaluation set. ES-PINN provides the best held-out trajectory and boundary accuracy.
Relative to Enhanced XPINN, the strongest trajectory baseline, it
reduces angle and speed RMSE by \(19\%\) and \(17\%\), respectively,
while reducing hard-CCT MAE by \(79\%\). The larger boundary
improvement indicates that modest aggregate trajectory errors can
still produce consequential CCT shifts. PI-DeepONet further
illustrates this distinction: its mean CCT error is comparatively
small despite substantially larger trajectory errors.
Additional reconstruction and extrapolation diagnostics are reported
in the Appendix~\ref{app:additional-results}.

\begin{figure}[t]
    \centering
    \includegraphics[width=\columnwidth]
    {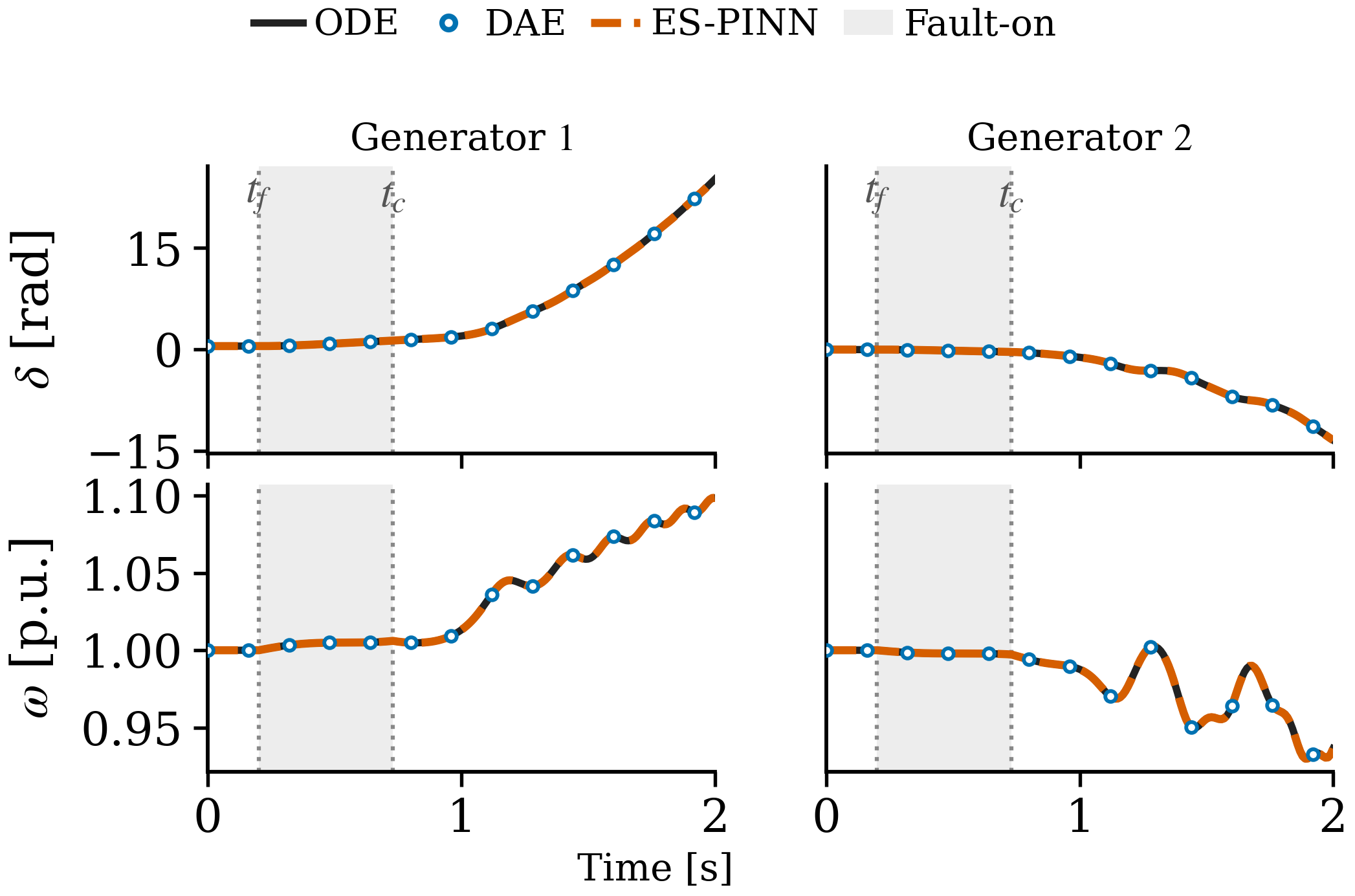}
    \caption{Representative  IEEE 30-bus trajectory for a
    midpoint fault on line \((3,4)\) followed by topology restoration.
    Rows show the rotor angle and speed of the ODE-selected critical
    generator pair. Shading denotes the fault-on phase.}
    \label{fig:case30-trajectory}
\end{figure}

\begin{table}[t]
\centering
\small
\setlength{\tabcolsep}{3.5pt}
\begin{tabular}{@{}lccc@{}}
\toprule
Method or variant
& $\mathrm{RMSE}_{\delta}$
& $\mathrm{RMSE}_{\omega}$
& CCT MAE \\
\midrule

\multicolumn{4}{@{}l}{\textit{Method comparison}} \\

Vanilla PINN
& $9.59\pm2.47$
& $13.2\pm4.1$
& $6.68\pm4.82$ \\

Enhanced XPINN
& $1.52\pm0.10$
& $2.33\pm0.06$
& $1.49\pm0.34$ \\

PI-DeepONet
& $26.2\pm11.6$
& $27.8\pm2.7$
& $1.87\pm0.95$ \\

ES-PINN
& $\mathbf{1.23\pm0.08}$
& $\mathbf{1.94\pm0.20}$
& $\mathbf{0.32\pm0.07}$ \\

\addlinespace[2pt]
\midrule
\multicolumn{4}{@{}l}{\textit{ES-PINN component ablations}} \\

State-only hard-chain
& $1.38\pm0.07$
& $2.14\pm0.70$
& $1.22\pm0.08$ \\

Soft-chain ES-PINN
& $8.97\pm1.72$
& $5.44\pm0.77$
& $1.10\pm0.37$ \\

w/o residual loss
& $4.48\pm0.42$
& $4.90\pm0.25$
& $4.06\pm0.73$ \\

\bottomrule
\end{tabular}
\caption{Held-out IEEE 30-bus restore-clearing accuracy and component
ablation results, reported as mean \(\pm\) standard deviation over five
seeds. The units of \(\mathrm{RMSE}_{\delta}\),
\(\mathrm{RMSE}_{\omega}\), and CCT MAE are
\(10^{-2}\,\mathrm{rad}\), \(10^{-4}\,\mathrm{p.u.}\), and
\(\mathrm{ms}\), respectively. Trajectory metrics use the \(7{,}600\)
off-grid parameter pairs, while CCT errors use independently
root-refined reduced-ODE hard-CCT references on the finite, uncensored
subset of the 50 unseen severity levels.}
\label{tab:case30_restore_results}
\end{table}

We next evaluate whether ES-PINN remains effective when trained
independently across different systems, fault mechanisms, and clearing actions:
an IEEE 9-bus mechanical perturbation with restoration, an IEEE
14-bus line fault with line removal, and an IEEE 30-bus shunt fault
with feeder tripping. Figure~\ref{fig:stability-cct} compares the
learned boundaries with reduced-ODE hard CCTs and targeted full-network DAE evaluations. Across all three configurations, ES-PINN tracks the reduced-ODE
first-crossing branches, including their finite and right-censored
regions. The targeted DAE comparison shows the same order of
full-network agreement for the reduced ODE and ES-PINN. Their
DAE-referenced MAEs are \(12.11\) and \(13.10\,\mathrm{ms}\) on IEEE
9, and \(4.51\) and \(3.94\,\mathrm{ms}\) on IEEE 30, respectively.
On IEEE 14, the learned and DAE boundaries nearly coincide within the
resolution of the targeted validation. This configuration-specific
agreement should not be interpreted as systematic correction of the
reduced model from which ES-PINN is learned. A cross-system \(\beta\)-sweep and a Case-30 temporal-sampling study, reported in the Appendix~\ref{app:numerical-reliability}, show that smooth-margin and sampling errors are substantially smaller than the learned end-to-end boundary error; for Case 30 at the default \(\beta=50\), the smooth-to-hard CCT discrepancy is only \(0.006\,\mathrm{ms}\).
At finite regular roots, the implicit sensitivities agree with centered
finite differences of the same learned CCT branches. Detailed
consistency statistics and endpoint diagnostics are reported in the
Appendix~\ref{app:numerical-reliability}.

\begin{figure}[t]
    \centering
    \includegraphics[width=\linewidth]
    {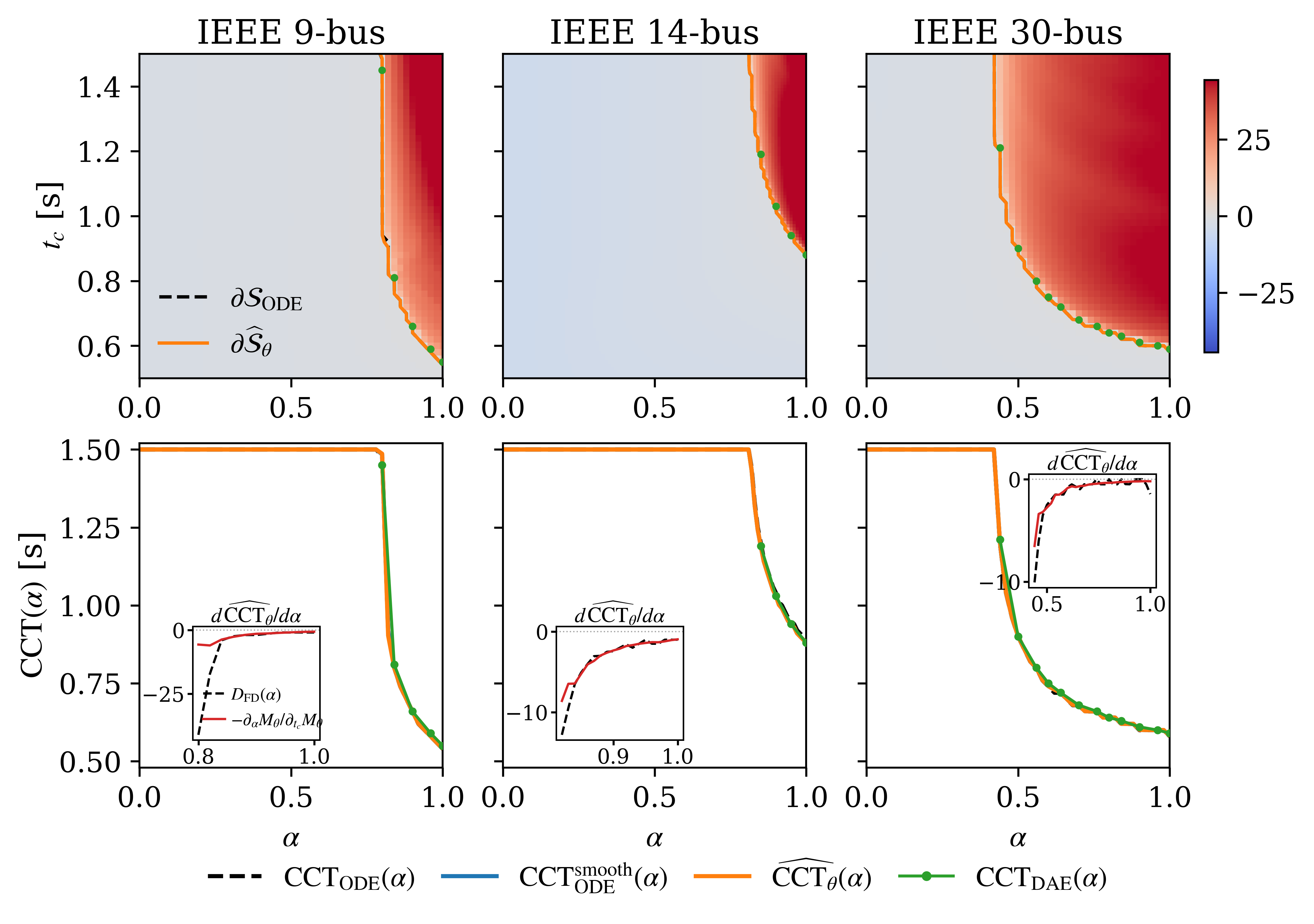}
    \caption{Cross-system trajectory-induced margins and CCT
    boundaries. Top: learned margins with reduced-ODE hard boundaries,
    root-refined ES-PINN boundaries, and available full-network DAE
    points. Bottom: the corresponding reduced-ODE, sampled-smooth,
    ES-PINN, and finite DAE CCT evaluations. Insets compare implicit
    learned-boundary sensitivities with centered finite differences.}
    \label{fig:stability-cct}
\end{figure}

Table~\ref{tab:case30_restore_results} isolates the contributions of
exact event-state chaining, prescribed right-hand event derivatives,
and phase-wise residual regularization.
State-only hard chaining produces only modest aggregate trajectory degradation
but increases CCT MAE by \(3.8\times\), indicating that the physical
right-hand event derivative is particularly important for boundary
accuracy. Replacing exact chaining with a matched soft interface
penalty increases angle RMSE by \(7.3\times\) and CCT MAE by
\(3.4\times\), directly supporting hard event-state chaining. Removing
the residual loss yields the largest CCT degradation, showing that
local Taylor structure and trajectory supervision do not replace
phase-wise residual regularization.


Finally, we evaluate whether the post-hoc scalar head can reconstruct
the root-refined learned CCT boundary without performing a
clearing-time sweep. For each finite and uncensored severity
\(\alpha\), we define the signed reconstruction discrepancy as
\[
\Delta t_c(\alpha)
=
10^3
\left[
\widetilde{\mathrm{CCT}}_{\phi}(\alpha)
-
\widehat{\mathrm{CCT}}_{\theta}(\alpha)
\right]
\ \mathrm{ms},
\]
where \(\widetilde{\mathrm{CCT}}_{\phi}\) is the distilled-head
prediction and \(\widehat{\mathrm{CCT}}_{\theta}\) is the root-refined
ES-PINN boundary. We report the mean absolute value of
\(\Delta t_c\), which measures reconstruction of the learned boundary
rather than independent physical CCT accuracy.
Figure~\ref{fig:cct-distillation} compares the distilled readouts with
the corresponding root-refined learned boundaries. On the finite
target grids, the distilled heads attain mean absolute reconstruction
errors of \(2.1\), \(2.1\), and \(4.3\,\mathrm{ms}\) for the IEEE
9-, 14-, and 30-bus configurations, respectively.

\begin{figure}[t]
    \centering
    \includegraphics[width=\columnwidth]
    {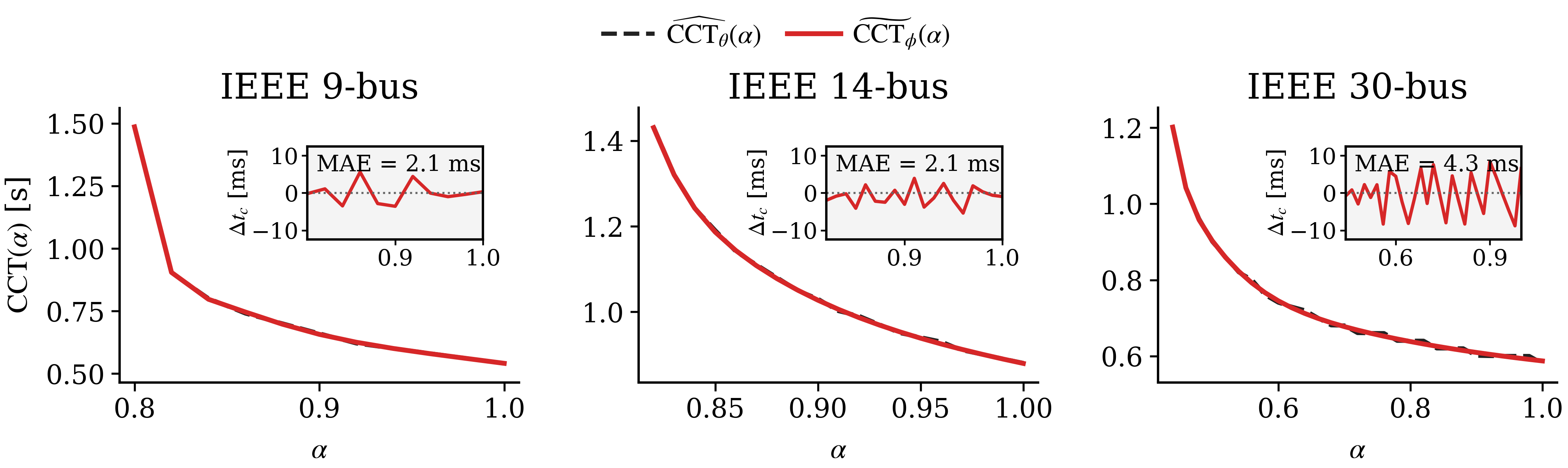}
    \caption{Post hoc scalar reconstruction of the learned CCT
    boundaries in Fig.~\ref{fig:stability-cct}. Dashed curves denote
    root-refined learned boundaries, red curves denote distilled
    readouts, and the insets show the signed discrepancy
    \(\Delta t_c\).}
    \label{fig:cct-distillation}
\end{figure}









\begin{table}[t]
\centering
\small
\setlength{\tabcolsep}{3.5pt}
\begin{tabular}{@{}llcc@{}}
\toprule
Evaluator & Device & CCT runtime  & Speedup vs.\ ODE \\
\midrule
\multicolumn{4}{@{}l}{\textit{Boundary extraction}}\\

Full-network DAE & CPU
& $202{,}187.8$
& -- \\

Reduced ODE & CPU
& $1{,}029.8$
& $1.00\times$ \\

ES-PINN & CPU
& $542.2$
& $1.90\times$ \\

ES-PINN & CUDA
& $\mathbf{80.4}$
& $\mathbf{12.81\times}$ \\

\addlinespace[2pt]
\multicolumn{4}{@{}l}{\textit{Direct deployment readout}}\\

Distilled CCT head & CPU
& $\mathbf{0.064}$
& $\mathbf{1.60\times10^{4}}$ \\

\bottomrule
\end{tabular}%
\caption{Median IEEE 30-bus CCT-query runtime over the 50 unseen
severity levels. Runtime is reported in milliseconds. ODE and ES-PINN evaluate 101 clearing-time candidates
followed by root refinement, while the DAE uses adaptive hard-margin
extraction. }
\label{tab:case30_cct_runtime}
\end{table}
Table~\ref{tab:case30_cct_runtime} compares boundary-extraction and
direct-readout costs. Timings exclude model loading and offline training, use
one CPU core or one CUDA device, and are measured after warm-up. The
distilled-head latency uses 2,000 batch-size-one forwards per severity.
Speedups are relative to reduced-ODE extraction. For the same clearing-time sweep and root-refinement
procedure, ES-PINN achieves a \(1.90\times\) CPU speedup over the
reduced ODE, increasing to \(12.81\times\) on CUDA. The distilled head
instead performs an amortized scalar query and therefore does not
execute the same boundary-extraction procedure; its median latency is
\(0.064\,\mathrm{ms}\). 


\subsection{Multi-Fault Representation Sharing}
\label{subsec:multi-fault-results}
We compare two independently trained fault-specific ES-PINNs
(Separate) with one hard-routed model that shares representations
across a load-bus shunt fault and a transmission-line fault (MoE). The
two families are a shunt fault at bus \(7\), cleared by removing line
\((4,6)\), and a midpoint fault on line \((3,4)\), cleared by removing
the faulted line. At each data fraction and seed, both methods receive
identical subsets of complete \((\alpha,t_c)\) scenarios. Pooled
trajectory metrics use both fault families, whereas CCT error is
evaluated only on the boundary-bearing line fault because the load-bus
configuration remains right-censored throughout the queried range.
Complete grid and architecture details are provided in the Appendix~\ref{app:reproducibility}. 

Table~\ref{tab:moe_low_data} shows that representation sharing improves
pooled trajectory accuracy at every data fraction, and the
\(10\%\)-data MoE already outperforms the full-data Separate models on
both state metrics. Its boundary benefit
is data-dependent: MoE attains lower line-fault CCT error at the
\(0.1\) and \(0.2\) data fractions, while Separate regains a modest
advantage at \(0.5\) and full data. Thus, sharing improves low-data
robustness but does not replace fault-specific specialization when
each family is sufficiently sampled. MoE uses \(50.8\%\) fewer
parameters and consolidates both fault families into one checkpoint,
although it requires more total training time. The details are shown in the Appendix.





\begin{table}[t]
\centering
\small
\setlength{\tabcolsep}{2.2pt}
\begin{tabular}{@{}clccc@{}}
\toprule
Data frac. & Method
& \(\mathrm{RMSE}_{\delta}\)
& \(\mathrm{RMSE}_{\omega}\)
& CCT MAE \\
\midrule

0.1 & Separate
& $0.21\pm0.16$
& $10.21\pm5.13$
& $7.19\pm5.00$ \\
& MoE
& $\boldsymbol{0.15\pm0.02}$
& $\boldsymbol{4.01\pm1.25}$
& $\boldsymbol{3.92\pm1.85}$ \\
\addlinespace

0.2 & Separate
& $0.19\pm0.11$
& $9.59\pm4.88$
& $4.39\pm1.95$ \\
& MoE
& $\boldsymbol{0.16\pm0.01}$
& $\boldsymbol{3.97\pm1.63}$
& $\boldsymbol{2.36\pm0.68}$ \\
\addlinespace

0.5 & Separate
& $0.18\pm0.09$
& $8.34\pm2.21$
& $\boldsymbol{1.38\pm0.49}$ \\
& MoE
& $\boldsymbol{0.16\pm0.02}$
& $\boldsymbol{3.79\pm1.62}$
& $1.79\pm1.10$ \\
\addlinespace

1.0 & Separate
& $0.18\pm0.06$
& $8.21\pm2.36$
& $\boldsymbol{0.76\pm0.24}$ \\
& MoE
& $\boldsymbol{0.15\pm0.01}$
& $\boldsymbol{4.34\pm1.73}$
& $0.97\pm0.26$ \\

\bottomrule
\end{tabular}
\caption{Low-data multi-fault interpolation on nested dense-grid
parameter pairs absent from the native grids. Values are reported as
mean \(\pm\) standard deviation over five seeds. The units of
\(\mathrm{RMSE}_{\delta}\), \(\mathrm{RMSE}_{\omega}\), and CCT MAE
are \(\mathrm{rad}\), \(10^{-4}\,\mathrm{p.u.}\), and
\(\mathrm{ms}\), respectively. Pooled trajectory metrics include both
fault families, while hard-CCT MAE is computed on the finite,
uncensored unseen severity levels of the boundary-bearing line fault.
Lower is better for all metrics; bold indicates the better result at
each data fraction.}
\label{tab:moe_low_data}
\end{table}

\section{Conclusion}
The results show that stability-boundary accuracy depends not only on
aggregate trajectory fit, but also on how switching events are
represented. The ablations identify complementary roles for exact
state chaining, the physical right-hand event derivative, and
phase-wise residual regularization, while the cross-system and targeted
DAE evaluations support the findings beyond the primary benchmark. The
error analysis further connects these trajectory-level approximations
to CCT accuracy. This highlights the importance of preserving event-local
structure near critical boundaries, where small interface errors can
translate into appreciable CCT shifts. The current scope assumes known phase-wise reduced dynamics, known
event rules, and continuous generator states. Future work will extend
the approach to higher-fidelity DAEs, uncertain and multi-stage events,
and larger contingency families, with the longer-term goal of
uncertainty-aware hybrid surrogates that retain reliable operational
boundaries.



\appendix

\section{Architectures and Reproducibility Details}
\label{app:reproducibility}

This section records the neural architectures and numerical settings
used in the primary IEEE 30-bus restore-clearing benchmark. All four
surrogates receive the same supervised trajectories, parameter grid,
phase-wise physics-collocation budgets, optimization schedule, and
five random seeds. The three component variants retain the same
phase-network dimensions, training data, and optimization protocol
while changing only the specified event or loss mechanism.

The system contains six generators, so each surrogate predicts
\(2n_g=12\) state variables. Table~\ref{tab:case30-architectures}
summarizes the architectures and trainable parameter counts. The
notation \([m]\times d\) denotes \(d\) hidden layers of width \(m\).

\begin{table*}[t]
\centering
\caption{Neural architectures used in the IEEE 30-bus
restore-clearing benchmark. ES-PINN and Enhanced XPINN use identical
phase-network dimensions.}
\label{tab:case30-architectures}
\small
\setlength{\tabcolsep}{4pt}
\begin{tabular}{@{}p{2.2cm}p{5.5cm}p{2.4cm}p{4.3cm}r@{}}
\toprule
Method
& Neural architecture
& Activation
& Event treatment
& Parameters \\
\midrule

ES-PINN
&
D1: exact equilibrium;
D2: \(2\!-\![64]\times5\!-\!12\);
D3: \(3\!-\![128]\times7\!-\!12\)
&
Sine, \(w_0=5\)
&
Hard event-state chaining with local Taylor-gated corrections
&
\(118{,}744\)
\\

Enhanced XPINN
&
D1: exact equilibrium;
D2: \(2\!-\![64]\times5\!-\!12\);
D3: \(3\!-\![128]\times7\!-\!12\)
&
Sine, \(w_0=5\)
&
Independent phase networks with soft interfaces at \(t_f\) and \(t_c\)
&
\(118{,}744\)
\\

Vanilla PINN
&
Global MLP:
\(3\!-\![138]\times8\!-\!12\)
&
Tanh
&
Global representation with a soft initial-state penalty
&
\(136{,}494\)
\\

PI-DeepONet
&
Branch:
\(2\!-\![129]\times4\!-\!129\);
trunk:
\(1\!-\![129]\times4\!-\!129\);
linear \(129\!-\!12\) readout
&
Tanh
&
Global operator representation with a soft initial-state penalty
&
\(136{,}365\)
\\

\bottomrule
\end{tabular}
\end{table*}

Replacing the learned pre-fault component by the exact equilibrium
reduces the ES-PINN and Enhanced XPINN parameter counts. Consequently,
the Vanilla PINN and PI-DeepONet contain approximately \(15\%\) more
trainable parameters than ES-PINN. Neither global baseline therefore
receives less neural capacity than the proposed model. Soft-chain
ES-PINN, State-only hard-chain, and \emph{w/o residual} all use the
same \(118{,}744\)-parameter phase networks as the complete model;
the interface penalty and gate changes introduce no additional
trainable parameters.

The severity and clearing-time coordinates are normalized over their
training ranges. Loss-weight calibration is performed independently
for every method and random seed. The same five-seed protocol is used
for the main comparison and the three component variants, while the
full-network DAE reference and numerical-convergence diagnostics are
deterministic.

\section{Fault Models and Clearing Operators}
\label{app:fault-models}

This section specifies the contingency constructions used to generate
the phase-wise network and mechanical operators. The same fault and
clearing definitions are used by the reduced swing-equation reference,
the physics-informed surrogates, and the full-network DAE simulations.
For the reduced model, constant-impedance loads are first absorbed into
the bus-admittance matrix, after which the network is augmented with
the generator transient-reactance branches. The resulting matrix is
partitioned as
\[
\widetilde{Y}
=
\begin{bmatrix}
Y_{\mathrm{bb}} & Y_{\mathrm{bg}}\\
Y_{\mathrm{gb}} & Y_{\mathrm{gg}}
\end{bmatrix},
\]
where the subscripts \(\mathrm{b}\) and \(\mathrm{g}\) denote
algebraic bus and generator-internal variables, respectively. The
Kron-reduction map is
\[
\mathcal{K}(Y_{\mathrm{bus}})
=
Y_{\mathrm{gg}}
-
Y_{\mathrm{gb}}
Y_{\mathrm{bb}}^{-1}
Y_{\mathrm{bg}}.
\]
The full-network DAE retains the algebraic bus variables and therefore
uses the corresponding unreduced network matrices.

For contingency member \(k\), the phase-wise reduced electrical
operators are
\[
Y_{\mathrm{red}}^{\mathrm{flt},k}(\alpha)
=
\mathcal{K}\!\left(
Y_{\mathrm{bus}}^{\mathrm{flt},k}(\alpha)
\right),
\qquad
Y_{\mathrm{red}}^{\mathrm{pst},k}
=
\mathcal{K}\!\left(
Y_{\mathrm{bus}}^{\mathrm{pst},k}
\right).
\]
Together with the prescribed mechanical inputs
\(P_m^{\mathrm{flt},k}(\alpha)\) and
\(P_m^{\mathrm{pst},k}\), these matrices define the fault-on and
post-clearing vector fields.

For a generator-bus or load-bus shunt fault at bus \(b_k\), the
fault-on network is
\[
Y_{\mathrm{bus}}^{\mathrm{flt},k}(\alpha)
=
Y_{\mathrm{bus}}^{\mathrm{pre},k}
+
\alpha y_{b_k}^{(k)}
\mathbf e_{b_k}\mathbf e_{b_k}^{\mathsf T},
\]
where \(\mathbf e_{b_k}\) is the bus-selection vector and
\(y_{b_k}^{(k)}\) is the reference shunt admittance assigned to the
contingency. Hence \(\alpha=0\) recovers the pre-fault network, while
increasing \(\alpha\) strengthens the shunt-to-ground perturbation.

For a mechanical-power perturbation at generator \(g_k\), the
fault-on electrical network remains unchanged, and
\[
P_{m,g_k}^{\mathrm{flt},k}(\alpha)
=
\left(
1+s_{\mathrm{mech}}^{(k)}\alpha
\right)
P_{m,g_k}^{\mathrm{pre},k},
\]
with the remaining mechanical inputs unchanged. The post-clearing
vector \(P_m^{\mathrm{pst},k}\) is specified by the contingency and
need not equal the pre-fault vector in the general formulation. For
the restore-clearing mechanical perturbations used in the experiments,
the selected mechanical input is reset at clearing, giving
\[
P_m^{\mathrm{pst},k}
=
P_m^{\mathrm{pre},k}.
\]
Thus, the physical interpretation of \(\alpha\) is
configuration-specific: it scales an electrical shunt for bus faults
and a selected mechanical input for mechanical perturbations.

For a fault on transmission line \((i_k,j_k)\), let
\[
y_{i_kj_k}
=
\frac{1}{
r_{i_kj_k}+\mathrm{i}x_{i_kj_k}
}
\]
denote the line-series admittance, and let
\(\lambda_k\in(0,1)\) denote the fault position measured from
\(i_k\). The original series branch is replaced by two pure-series
segments connected through a virtual fault node \(f_k\):
\[
y_{i_k f_k}
=
\frac{y_{i_kj_k}}{\lambda_k},
\qquad
y_{f_k j_k}
=
\frac{y_{i_kj_k}}{1-\lambda_k}.
\]
The original terminal shunts of the line \(\pi\)-model remain at
\(i_k\) and \(j_k\), and the virtual node receives only the
severity-dependent fault shunt
\[
y_{f_k}^{\mathrm{sh}}(\alpha)
=
\alpha y_{i_kj_k}.
\]
This is a finite-admittance severity parameterization: \(\alpha=0\)
recovers the unfaulted line, and increasing \(\alpha\) approaches a
stronger shunt fault. A strict bolted-fault limit would require an
unbounded shunt admittance and is not implied by \(\alpha=1\).

After constructing the augmented bus matrix, partition it with respect
to the original buses and the scalar virtual-node variable:
\[
Y_{\mathrm{aug}}^{\mathrm{flt},k}(\alpha)
=
\begin{bmatrix}
Y_{\mathrm{bb}}^{\mathrm{aug},k}(\alpha)
&
\mathbf y_{\mathrm{b}f}^{(k)}
\\
\mathbf y_{f\mathrm{b}}^{(k)}
&
y_{ff}^{(k)}(\alpha)
\end{bmatrix}.
\]
Eliminating \(f_k\) gives
\[
Y_{\mathrm{bus}}^{\mathrm{flt},k}(\alpha)
=
Y_{\mathrm{bb}}^{\mathrm{aug},k}(\alpha)
-
\mathbf y_{\mathrm{b}f}^{(k)}
\left(
y_{ff}^{(k)}(\alpha)
\right)^{-1}
\mathbf y_{f\mathrm{b}}^{(k)}.
\]
Because only the original series contribution is replaced by the two
segments while the terminal shunts are retained, this construction
recovers the original pre-fault \(\pi\)-model exactly at
\(\alpha=0\).

Clearing removes the active fault shunt and selects the post-clearing
network according to the prescribed protection action:
\[
Y_{\mathrm{bus}}^{\mathrm{pst},k}
=
\begin{cases}
Y_{\mathrm{bus}}^{\mathrm{pre},k},
& \text{topology restoration},\\[2mm]
\mathcal{R}_{\ell_k}
\!\left(
Y_{\mathrm{bus}}^{\mathrm{pre},k}
\right),
& \text{removal of line }\ell_k,\\[2mm]
\mathcal{R}_{\mathcal{L}(b_k)}
\!\left(
Y_{\mathrm{bus}}^{\mathrm{pre},k}
\right),
& \text{incident-feeder tripping at }b_k.
\end{cases}
\]
Here, \(\mathcal{R}_{\ell}\) removes the complete \(\pi\)-model stamp
of line \(\ell\), including its terminal shunts, and
\(\mathcal{L}(b_k)\) denotes the set of removable non-transformer
branches incident to the faulted bus. Transformer and phase-shifting
branches are excluded from the removable-line set.

Consequently, \(\alpha\) controls the fault-on severity, whereas the
contingency index \(k\) specifies the fault location, fault mechanism,
and prescribed clearing action. The clearing time \(t_c\) changes only
the switching instant and does not alter either the fault-on operator
or the prescribed post-clearing operator.

\section{Formal Assumptions and Proof of
the
Uniform Residual-to-Hard-CCT Control Theorem}
\label{app:end-to-end-cct-proof}

We first state the formal assumptions suppressed from the compact
theorem statement and then prove the result in three steps:
phase-wise residual control of the hybrid trajectory, control of the
reduced-model hard margin by the learned sampled-smooth margin, and
preservation and refinement of the initial hard-CCT crossing.

\paragraph{Formal assumptions and notation.}

Fix a contingency member \(k\) and a compact severity interval
\(\mathcal A_0\subseteq\mathcal A_k\) on which the reduced-model hard
CCT is finite and uncensored. For
\(\alpha\in\mathcal A_0\), define
\(
t_c^\star(\alpha)
=
\mathrm{CCT}_k(\alpha).
\)
For each \(t_c\in\mathcal T=[t_{c,\min},t_{c,\max}]\), let
\(
I_2(t_c)=[t_f,t_c],
\;
I_3(t_c)=[t_c,T].
\)

Let \(r_q^{(k)}\) denote the vector phase residual. Consistently with the
non-anticipative fault-on representation, define the uniform
continuous residual norms
\begin{equation}
\begin{aligned}
\mathcal R_2^{(k)}
&=
\sup_{\substack{
\alpha\in\mathcal A_0\\
t_c\in\mathcal T
}}
\left\|
r_2^{(k)}(\cdot;\alpha)
\right\|_{L^1(I_2(t_c))},
\\
\mathcal R_3^{(k)}
&=
\sup_{\substack{
\alpha\in\mathcal A_0\\
t_c\in\mathcal T
}}
\left\|
r_3^{(k)}(\cdot;\alpha,t_c)
\right\|_{L^1(I_3(t_c))}.
\end{aligned}
\label{eq:uniform-phase-residuals}
\end{equation}
These are continuous residual norms rather than empirical collocation
losses.

Assume that \(f_2^{(k)}\) and \(f_3^{(k)}\) are uniformly Lipschitz,
with constants \(\Lambda_2\) and \(\Lambda_3\), on a common bounded
tube containing the exact and learned trajectories. Define the
maximum phase durations
\(
\overline{\tau}_2
=
t_{c,\max}-t_f,
\;
\overline{\tau}_3
=
T-t_{c,\min},
\)
and
\begin{equation}
C_2
=
\exp\left(
\Lambda_2\overline{\tau}_2
+
\Lambda_3\overline{\tau}_3
\right),
\qquad
C_3
=
\exp\left(
\Lambda_3\overline{\tau}_3
\right).
\label{eq:appendix-trajectory-constants}
\end{equation}
The learned trajectory uses the exact pre-fault equilibrium and exact
state chaining at both event interfaces.

Let
\(
\mathcal T_h
=
\{t_\ell\}_{\ell=1}^{N_t},
\;
0=t_1<\cdots<t_{N_t}=T,
\)
be the trajectory-time grid, with maximum adjacent spacing
\(h_t\). Assume that every pairwise reduced-ODE rotor-angle separation
is uniformly \(K_t\)-Lipschitz in time over
\(\mathcal A_0\times\mathcal T\).

Let
\(
\mathcal T_{h_c}
=
\{t_{c,0},\ldots,t_{c,J}\},
\;
t_{c,0}=t_{c,\min},
\;
t_{c,J}=t_{c,\max},
\)
denote the ordered clearing-time grid used to identify the first
sign-changing cell, and define
\(
h_c
=
\max_{0\leq j<J}
\left(
t_{c,j+1}-t_{c,j}
\right).
\)

The continuous neural networks and event gates, together with exact
state chaining, make
\(M_\theta^{(k)}(\alpha,\cdot)\) continuous in \(t_c\).
In particular, when a trajectory-time sample changes from the
fault-on to the post-clearing component at \(t=t_c\), the two
representations agree exactly at the interface. The learned margin is
locally \(C^1\) away from trajectory-time samples that coincide with
an event.

Assume that the initial hard-CCT crossing is uniformly interior,
isolated from earlier crossings, and transverse. Specifically, there
exist constants
\(\rho,\gamma,\mu>0\), independent of
\(\alpha\in\mathcal A_0\), such that
\begin{equation}
\left[
t_c^\star(\alpha)-\rho,\,
t_c^\star(\alpha)+\rho
\right]
\subseteq
\mathcal T,
\qquad
\alpha\in\mathcal A_0,
\label{eq:cct-uniform-interiority}
\end{equation}
and
\begin{equation}
m^{(k)}(\alpha,t_c)
\leq
-\gamma,
\qquad
t_c\in
[t_{c,\min},t_c^\star(\alpha)-\rho].
\label{eq:cct-prebranch-separation}
\end{equation}
Within the local branch neighborhood, assume
\begin{equation}
\operatorname{sgn}
\left(
t_c-t_c^\star(\alpha)
\right)
m^{(k)}(\alpha,t_c)
\geq
\mu
\left|
t_c-t_c^\star(\alpha)
\right|,
\label{eq:cct-branch-regularity}
\end{equation}
for $\left|
t_c-t_c^\star(\alpha)
\right|
\leq\rho.$
Define the combined trajectory and margin error
\begin{equation}
\Xi^{(k)}
=
2
\left(
C_2\mathcal R_2^{(k)}
+
C_3\mathcal R_3^{(k)}
\right)
+
\varepsilon
+
\frac{\log N}{\beta}
+
K_t h_t,
\label{eq:total-margin-error}
\end{equation}
where
\(N=N_t\binom{n_g}{2}\), and assume
\begin{equation}
\Xi^{(k)}<\gamma,
\qquad
\frac{2\Xi^{(k)}}{\mu}+h_c<\rho.
\label{eq:cct-small-error-condition}
\end{equation}

Throughout the proof, we use a state norm whose restriction to the
rotor-angle components dominates the componentwise
\(\ell_\infty\) norm.

\begin{lemma}[Exact-chain residual-to-trajectory control]
\label{lem:hybrid-trajectory-control}
Under the assumptions above,
\begin{small}
    \begin{equation}
\sup_{\substack{
\alpha\in\mathcal A_0,\,
t_c\in\mathcal T\\
t\in[0,T]
}}
\left\|
\widehat{x}^{(k)}_\theta(t;\alpha,t_c)
-
x^{(k)}(t;\alpha,t_c)
\right\|
\leq
C_2\mathcal R_2^{(k)}
+
C_3\mathcal R_3^{(k)}.
\label{eq:appendix-trajectory-control}
\end{equation}
\end{small}
\end{lemma}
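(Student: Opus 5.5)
We argue phase by phase along the exact chain, using a Grönwall estimate in integral form on each learned phase. Fix \(\alpha\in\mathcal A_0\) and \(t_c\in\mathcal T\), and write \(e(t)=\widehat{x}^{(k)}_\theta(t;\alpha,t_c)-x^{(k)}(t;\alpha,t_c)\).

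\emph{Pre-fault phase.} On \(\mathcal D_1=[0,t_f)\) both the learned and the exact trajectory equal the equilibrium \(x_0\), since \(f_1^{(k)}(x_0)=0\). Hence \(e\equiv 0\) there. By the hard chain in Eq.~\eqref{eq:ecpinn-hard-chain} and the state continuity in Eq.~\eqref{eq:state-continuity}, we also have \(e(t_f)=0\).

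\emph{Fault-on phase.} On \(I_2(t_c)\) the definition of the residual gives \(\partial_t\widehat{x}^{(k,2)}_\theta=f_2^{(k)}(\widehat{x}^{(k,2)}_\theta;\alpha)+r_2^{(k)}\). Subtracting the exact dynamics and integrating from \(t_f\) yields
\[
\|e(t)\|\leq\int_{t_f}^{t}\Lambda_2\|e(s)\|\,ds+\int_{t_f}^{t}\|r_2^{(k)}(s;\alpha)\|\,ds .
\]
Here we use that both trajectories lie in the common tube, so the Lipschitz bound applies. The forcing term is nondecreasing in \(t\) and bounded by \(\mathcal R_2^{(k)}\). Grönwall's inequality therefore gives
\[
\sup_{t\in I_2(t_c)}\|e(t)\|\leq e^{\Lambda_2(t_c-t_f)}\mathcal R_2^{(k)}\leq e^{\Lambda_2\overline{\tau}_2}\mathcal R_2^{(k)} .
\]
The fault-on component depends only on \((t,\alpha)\). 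Consequently its \(L^1\) norm over \([t_f,t_c]\) is bounded by the supremum in Eq.~\eqref{eq:uniform-phase-residuals}, uniformly in \(t_c\).

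\emph{Post-clearing phase.} Exact chaining makes the learned phase-3 initial state equal to \(\widehat{x}^{(k,2)}_\theta(t_c;\alpha)\), while the exact phase-3 initial state is \(x^{(k)}(t_c^-)\). The interface defect \(e(t_c)\) is therefore exactly the fault-on error at \(t_c\); no separate jump term appears. Applying the same integral Grönwall argument on \(I_3(t_c)\) with constant \(\Lambda_3\) gives
\[
\|e(t)\|\leq e^{\Lambda_3(T-t_c)}\bigl(\|e(t_c)\|+\mathcal R_3^{(k)}\bigr)
\leq e^{\Lambda_3\overline{\tau}_3}e^{\Lambda_2\overline{\tau}_2}\mathcal R_2^{(k)}+e^{\Lambda_3\overline{\tau}_3}\mathcal R_3^{(k)} .
\]
This equals \(C_2\mathcal R_2^{(k)}+C_3\mathcal R_3^{(k)}\). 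On phase 2 the bound \(e^{\Lambda_2\overline{\tau}_2}\mathcal R_2^{(k)}\) is also dominated by this quantity, because \(C_2\ge e^{\Lambda_2\overline\tau_2}\). Taking suprema over \(\alpha\), \(t_c\) and \(t\) gives Eq.~\eqref{eq:appendix-trajectory-control}.

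\emph{Main obstacle.} The only delicate points are justifying the use of \(L^1\) residuals and keeping the argument inside the tube. The learned trajectory is \(C^1\) within each phase, since the networks and gates are smooth, so the integral identity holds classically and Grönwall's inequality in its integral form needs only \(L^1\) forcing. Membership in the common tube, where the uniform Lipschitz constants apply, is part of the stated assumptions, so no bootstrap argument is required.
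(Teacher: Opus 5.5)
Your proposal is correct and follows essentially the same route as the paper: zero error on the exact pre-fault equilibrium, a Gr\"onwall estimate on the fault-on interval giving $e^{\Lambda_2\overline{\tau}_2}\mathcal R_2^{(k)}$, and a second Gr\"onwall estimate on the post-clearing interval whose initial error is exactly the fault-on error at $t_c$ because of hard chaining. The differences are cosmetic: you use the integral form of Gr\"onwall's inequality, which avoids the paper's almost-everywhere differential inequality for $\|e(t)\|$, and you state explicitly that the phase-2 bound is dominated by $C_2\mathcal R_2^{(k)}+C_3\mathcal R_3^{(k)}$, which the paper leaves implicit.
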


\begin{proof}
Fix arbitrary
\(\alpha\in\mathcal A_0\) and \(t_c\in\mathcal T\), and suppress
these arguments when no ambiguity arises.

On the fault-on interval \(I_2(t_c)\), define
\[
e_2(t)
=
x^{(k,2)}(t;\alpha)
-
\widehat{x}^{(k,2)}_\theta(t;\alpha).
\]
The exact pre-fault representation and hard fault-on initialization
give
\(
e_2(t_f)=0.
\)
By the exact dynamics and the phase-2 residual definition,
\[
\partial_t e_2
=
f_2^{(k)}
\left(
x^{(k,2)};\alpha
\right)
-
f_2^{(k)}
\left(
\widehat{x}^{(k,2)}_\theta;\alpha
\right)
-
r_2^{(k)}(\cdot;\alpha).
\]
The \(\Lambda_2\)-Lipschitz continuity of \(f_2^{(k)}\) therefore
implies, almost everywhere on \(I_2(t_c)\),
\[
\frac{d}{dt}\|e_2(t)\|
\leq
\Lambda_2\|e_2(t)\|
+
\|r_2^{(k)}(t;\alpha)\|.
\]
Gronwall's inequality gives
\begin{align}
\|e_2(t)\|
&\leq
\int_{t_f}^{t}
e^{\Lambda_2(t-s)}
\|r_2^{(k)}(s;\alpha)\|\,ds
\nonumber\\
&\leq
e^{\Lambda_2\overline{\tau}_2}
\mathcal R_2^{(k)},
\qquad
t\in I_2(t_c).
\label{eq:d2-gronwall}
\end{align}

On the post-clearing interval \(I_3(t_c)\), define
\[
e_3(t)
=
x^{(k,3)}(t;\alpha,t_c)
-
\widehat{x}^{(k,3)}_\theta(t;\alpha,t_c).
\]
Continuity of the exact trajectory and hard chaining of the learned
trajectory give
\[
e_3(t_c)
=
x^{(k,2)}(t_c;\alpha)
-
\widehat{x}^{(k,2)}_\theta(t_c;\alpha)
=
e_2(t_c).
\]
Thus, clearing introduces no independent state-interface defect.

The \(\Lambda_3\)-Lipschitz continuity of \(f_3^{(k)}\) similarly
gives
\[
\frac{d}{dt}\|e_3(t)\|
\leq
\Lambda_3\|e_3(t)\|
+
\|r_3^{(k)}(t;\alpha,t_c)\|
\]
almost everywhere on \(I_3(t_c)\). A second application of
Gronwall's inequality yields
\begin{align}
\|e_3(t)\|
&\leq
e^{\Lambda_3(t-t_c)}
\left[
\|e_3(t_c)\|
+
\left\|
r_3^{(k)}(\cdot;\alpha,t_c)
\right\|_{L^1(I_3(t_c))}
\right]
\nonumber\\
&\leq
e^{\Lambda_3\overline{\tau}_3}
\left[
e^{\Lambda_2\overline{\tau}_2}
\mathcal R_2^{(k)}
+
\mathcal R_3^{(k)}
\right]
\nonumber\\
&=
C_2\mathcal R_2^{(k)}
+
C_3\mathcal R_3^{(k)}.
\label{eq:d3-gronwall}
\end{align}

The pre-fault error is identically zero because
\[
x^{(k,1)}(t)
=
\widehat{x}^{(k,1)}_\theta(t)
=
x_0.
\]
Equations~\eqref{eq:d2-gronwall} and
\eqref{eq:d3-gronwall} therefore control the complete hybrid
trajectory. All constants are uniform over
\(\alpha\in\mathcal A_0\) and \(t_c\in\mathcal T\), which proves
Eq.~\eqref{eq:appendix-trajectory-control}.
\end{proof}

\begin{lemma}[Learned sampled-smooth margin to reduced-model hard margin]
\label{lem:trajectory-margin-control}
Let
\(
\mathcal E_x^{(k)}
=
C_2\mathcal R_2^{(k)}
+
C_3\mathcal R_3^{(k)}.
\)
Then
\begin{equation}
\begin{aligned}
    \sup_{\substack{
\alpha\in\mathcal A_0\\
t_c\in\mathcal T
}}
&\left|
M_\theta^{(k)}(\alpha,t_c)
-
m^{(k)}(\alpha,t_c)
\right|
\\ &\leq
2\mathcal E_x^{(k)}
+
\varepsilon
+
\frac{\log N}{\beta}
+
K_t h_t
=
\Xi^{(k)}.
\end{aligned}
\label{eq:learned-to-hard-margin-control}
\end{equation}
\end{lemma}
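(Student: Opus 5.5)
The bound will follow from the triangle inequality
\[
|M_\theta^{(k)}-m^{(k)}|\le |M_\theta^{(k)}-M^{(k)}|+|M^{(k)}-m^{(k)}|,
\]
uniformly over \((\alpha,t_c)\in\mathcal A_0\times\mathcal T\). The second term is exactly the sampled-smooth-to-hard estimate in Eq.~\eqref{eq:smooth-hard-margin-error}. That estimate applies because we assumed the pairwise reduced-ODE separations are \(K_t\)-Lipschitz uniformly over \(\mathcal A_0\times\mathcal T\), so it contributes \(\varepsilon+\log N/\beta+K_th_t\). The first term compares the same operator \(\mathcal G_{\beta,\varepsilon,h_t}\) evaluated on two angle trajectories. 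I will bound it by \(2\mathcal E_x^{(k)}\) using a Lipschitz property of \(\mathcal G\) together with Lemma~\ref{lem:hybrid-trajectory-control}.

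\textbf{Lipschitz property of \(\mathcal G\).} I will show that \(\mathcal G\) is \(1\)-Lipschitz from the sup norm on the sampled pairwise differences into \(\mathbb R\). This rests on three facts.
\begin{itemize}
\item \(\operatorname{smax}_\beta\) is \(1\)-Lipschitz in \(\ell_\infty\): it is monotone and satisfies \(\operatorname{smax}_\beta\{z_m+c\}=\operatorname{smax}_\beta\{z_m\}+c\). Equivalently, its gradient is a probability vector.
\item \(|\cdot|_\varepsilon\) is \(1\)-Lipschitz, since its derivative \(z/\sqrt{z^2+\varepsilon^2}\) has magnitude at most \(1\).
\item For each sample \(t_\ell\) and pair \(i<j\),
\[
\bigl|(\widehat\delta_i-\widehat\delta_j)(t_\ell)-(\delta_i-\delta_j)(t_\ell)\bigr|\le |\widehat\delta_i-\delta_i|+|\widehat\delta_j-\delta_j|\le 2\|\widehat x_\theta^{(k)}(t_\ell)-x^{(k)}(t_\ell)\|.
\]
The last step uses the standing convention that the state norm dominates the componentwise \(\ell_\infty\) norm on the rotor-angle block.
\end{itemize}
Combining these with Lemma~\ref{lem:hybrid-trajectory-control} at every \(t_\ell\in[0,T]\) gives \(|M_\theta^{(k)}-M^{(k)}|\le 2\mathcal E_x^{(k)}\). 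The \(\Delta_{\max}\) offsets cancel in this difference.

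\textbf{Main obstacle.} The delicate point is that Lemma~\ref{lem:hybrid-trajectory-control} must be applied pointwise at every sample time, including samples in the pre-fault phase and samples lying exactly at the interfaces \(t_f\) or \(t_c\). In the pre-fault phase the error is zero. At an interface, exact chaining makes both one-sided learned states agree, so the bound holds regardless of which representation the sample uses. Taking the supremum over \(\alpha\) and \(t_c\) then finishes the proof, since every constant is uniform.
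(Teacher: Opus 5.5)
Your proposal is correct and follows essentially the same route as the paper. You use a triangle inequality through the reduced-ODE sampled-smooth margin \(M^{(k)}\), and you obtain \(|M_\theta^{(k)}-M^{(k)}|\le 2\mathcal E_x^{(k)}\) from the \(1\)-Lipschitz property of \(|\cdot|_\varepsilon\) and of log-sum-exp in \(\ell_\infty\), the factor \(2\) from pairwise differences, and Lemma~\ref{lem:hybrid-trajectory-control}.

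The only difference is that you cite Eq.~\eqref{eq:smooth-hard-margin-error} as given, whereas the main text only asserts it and the paper actually proves it inside this lemma. You should add the few lines that prove it: the sampling bracket \(0\le \max_{t\in[0,T]}d^{(k)}(t)-\max_{t_\ell\in\mathcal T_h}d^{(k)}(t_\ell)\le K_th_t\), the bracket \(|z|\le|z|_\varepsilon\le|z|+\varepsilon\), and \(\max\le\operatorname{smax}_\beta\le\max+\log N/\beta\).
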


\begin{proof}
For a rotor-angle trajectory \(\delta\), define its continuous-time
and sampled hard separations by
\[
H[\delta]
=
\max_{\substack{
t\in[0,T]\\
i<j
}}
\left|
\delta_i(t)-\delta_j(t)
\right|,
\;
H_h[\delta]
=
\max_{\substack{
t_\ell\in\mathcal T_h\\
i<j
}}
\left|
\delta_i(t_\ell)-\delta_j(t_\ell)
\right|.
\]
The reduced-model hard margin is
\[
m^{(k)}(\alpha,t_c)
=
H\left[
\delta^{(k)}(\cdot;\alpha,t_c)
\right]
-
\Delta_{\max}.
\]

Because the trajectory-time grid spans \([0,T]\) with maximum spacing
\(h_t\), every continuous-time maximizer lies within distance at most
\(h_t\) of a sampled time. The uniform temporal Lipschitz condition
therefore gives
\begin{equation}
0
\leq
H[\delta]-H_h[\delta]
\leq
K_t h_t.
\label{eq:time-sampling-margin-bound}
\end{equation}

For every \(z\in\mathbb R\),
\[
|z|
\leq
\sqrt{z^2+\varepsilon^2}
\leq
|z|+\varepsilon.
\]
Thus, if
\[
V_h[\delta]
=
\max_{\substack{
t_\ell\in\mathcal T_h\\
i<j
}}
\sqrt{
\left(
\delta_i(t_\ell)-\delta_j(t_\ell)
\right)^2
+
\varepsilon^2
},
\]
then
\begin{equation}
H_h[\delta]
\leq
V_h[\delta]
\leq
H_h[\delta]+\varepsilon.
\label{eq:smooth-absolute-bound}
\end{equation}

The log-sum-exp operator satisfies
\[
\max_{1\leq m\leq N}z_m
\leq
\operatorname{smax}_\beta\{z_m\}_{m=1}^{N}
\leq
\max_{1\leq m\leq N}z_m
+
\frac{\log N}{\beta}.
\]
Combining this inequality with
Eqs.~\eqref{eq:time-sampling-margin-bound} and
\eqref{eq:smooth-absolute-bound} yields
\begin{equation}
\left|
M^{(k)}(\alpha,t_c)
-
m^{(k)}(\alpha,t_c)
\right|
\leq
\varepsilon
+
\frac{\log N}{\beta}
+
K_t h_t.
\label{eq:ode-smooth-to-hard-margin}
\end{equation}

It remains to compare the learned and reduced-ODE sampled-smooth
margins. The map
\(
z\longmapsto\sqrt{z^2+\varepsilon^2}
\)
is \(1\)-Lipschitz, and log-sum-exp is \(1\)-Lipschitz with respect to
the \(\ell_\infty\) norm of its inputs. At any sampled time and for
any generator pair \(i<j\),
\begin{align*}
&
\left|
\left(
\widehat{\delta}^{(k)}_{\theta,i}
-
\widehat{\delta}^{(k)}_{\theta,j}
\right)
-
\left(
\delta_i^{(k)}-\delta_j^{(k)}
\right)
\right|
\\
&\qquad\leq
\left|
\widehat{\delta}^{(k)}_{\theta,i}
-
\delta_i^{(k)}
\right|
+
\left|
\widehat{\delta}^{(k)}_{\theta,j}
-
\delta_j^{(k)}
\right|
\\
&\qquad\leq
2
\sup_{t\in[0,T]}
\left\|
\widehat{x}^{(k)}_\theta(t;\alpha,t_c)
-
x^{(k)}(t;\alpha,t_c)
\right\|.
\end{align*}
Lemma~\ref{lem:hybrid-trajectory-control} therefore implies
\begin{equation}
\left|
M_\theta^{(k)}(\alpha,t_c)
-
M^{(k)}(\alpha,t_c)
\right|
\leq
2\mathcal E_x^{(k)}.
\label{eq:learned-to-ode-smooth-margin}
\end{equation}
The triangle inequality applied to
Eqs.~\eqref{eq:ode-smooth-to-hard-margin} and
\eqref{eq:learned-to-ode-smooth-margin} proves
Eq.~\eqref{eq:learned-to-hard-margin-control}.
\end{proof}

\begin{lemma}[Preservation and refinement of the initial hard-CCT branch]
\label{lem:local-cct-perturbation}
Under
Eqs.~\eqref{eq:cct-uniform-interiority}--%
\eqref{eq:cct-branch-regularity} and
\eqref{eq:cct-small-error-condition}, the first
negative-to-nonnegative cell of the learned margin belongs to the
initial reduced-model hard-CCT branch. Moreover,
\begin{equation}
\sup_{\alpha\in\mathcal A_0}
\left|
t_\theta^{\circ,(k)}(\alpha)
-
\mathrm{CCT}_k(\alpha)
\right|
\leq
\frac{\Xi^{(k)}}{\mu},
\label{eq:exact-learned-root-control}
\end{equation}
and
\begin{equation}
\sup_{\alpha\in\mathcal A_0}
\left|
\widehat{\mathrm{CCT}}_\theta^{(k)}(\alpha)
-
\mathrm{CCT}_k(\alpha)
\right|
\leq
\frac{\Xi^{(k)}}{\mu}
+
\tau_{\mathrm{root}}.
\label{eq:appendix-cct-perturbation}
\end{equation}
\end{lemma}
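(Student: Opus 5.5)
The plan is to work one severity \(\alpha\in\mathcal A_0\) at a time, writing \(t^\star=t_c^\star(\alpha)\). Every bound we use is uniform in \(\alpha\), so taking the supremum at the end costs nothing. The only quantitative input is Lemma~\ref{lem:trajectory-margin-control}:
\[
|M_\theta^{(k)}(\alpha,t_c)-m^{(k)}(\alpha,t_c)|\le\Xi^{(k)}
\]
for all \(t_c\in\mathcal T\). We combine it with the sign structure of the hard margin \(m^{(k)}\) from Eqs.~\eqref{eq:cct-prebranch-separation} and \eqref{eq:cct-branch-regularity}. Set \(\eta=\Xi^{(k)}/\mu\). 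By Eq.~\eqref{eq:cct-small-error-condition}, \(\eta<\rho\), and in fact \(2\eta+h_c<\rho\).

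The first step is to show that \(M_\theta^{(k)}(\alpha,\cdot)<0\) on \([t_{c,\min},t^\star-\eta)\). There are two regions to cover.
\begin{itemize}
\item On \([t_{c,\min},t^\star-\rho]\) we have \(m^{(k)}\le-\gamma\). Since \(\Xi^{(k)}<\gamma\), this gives \(M_\theta^{(k)}\le-\gamma+\Xi^{(k)}<0\).
\item On \([t^\star-\rho,t^\star-\eta)\), branch regularity gives \(m^{(k)}\le-\mu|t_c-t^\star|<-\mu\eta=-\Xi^{(k)}\). Hence \(M_\theta^{(k)}<0\).
\end{itemize}

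The second step is the symmetric statement to the right of the root. On \((t^\star+\eta,t^\star+\rho]\) we have \(m^{(k)}>\Xi^{(k)}\), so \(M_\theta^{(k)}>0\) there. At \(t^\star+\eta\) itself we get \(m^{(k)}\ge\Xi^{(k)}\), so \(M_\theta^{(k)}\ge0\).

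The third step locates the first sign-changing cell. Consider the ordered clearing grid \(\mathcal T_{h_c}\).
\begin{itemize}
\item Every grid point below \(t^\star-\eta\) has negative learned margin.
\item There is a grid point in \([t^\star+\eta,t^\star+\eta+h_c]\subseteq[t^\star+\eta,t^\star+\rho]\), which lies inside \(\mathcal T\) by Eq.~\eqref{eq:cct-uniform-interiority}. At that point the learned margin is nonnegative.
\end{itemize}
Therefore the first adjacent pair with pattern \(M_\theta^{(k)}<0\le M_\theta^{(k)}\) has its right endpoint in \([t^\star-\eta,t^\star+\eta+h_c]\). Its left endpoint is at most \(h_c\) below, so the whole cell lies in \([t^\star-\eta-h_c,\,t^\star+\eta+h_c]\subset(t^\star-\rho,t^\star+\rho)\). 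This is where the condition \(2\Xi^{(k)}/\mu+h_c<\rho\) enters, with some slack. So the selected cell sits in the branch neighbourhood of the initial hard-CCT crossing, and no earlier spurious crossing can be selected because the margin is strictly negative before \(t^\star-\eta\).

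The continuity of \(M_\theta^{(k)}(\alpha,\cdot)\) in \(t_c\), which follows from exact chaining as noted in the assumptions, guarantees a zero inside that cell. This zero is the selected \(t_\theta^{\circ,(k)}(\alpha)\). By steps one and two, \(M_\theta^{(k)}\) is nonzero outside \([t^\star-\eta,t^\star+\eta]\) within the cell, so the zero satisfies \(|t_\theta^{\circ,(k)}-t^\star|\le\eta\). This proves Eq.~\eqref{eq:exact-learned-root-control}, and the triangle inequality with Eq.~\eqref{eq:learned-margin-root} gives Eq.~\eqref{eq:appendix-cct-perturbation}.

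The main obstacle is making the cell-selection argument airtight: the first sign change must not occur before the branch, and the selected zero must lie in \([t^\star-\eta,t^\star+\eta]\), not merely somewhere in the cell. I would handle this by proving the strict-sign statements of steps one and two on half-open intervals before doing anything with the grid, and by defining \(t_\theta^{\circ,(k)}\) as any zero in the selected cell, which by those sign statements must lie in the \(\eta\)-window. The interiority assumption must also be checked explicitly to ensure the grid point right of \(t^\star+\eta\) exists within \(\mathcal T\).
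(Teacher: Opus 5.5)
Your proposal is correct and follows essentially the same route as the paper: sign control of $M_\theta^{(k)}$ from Lemma~\ref{lem:trajectory-margin-control} together with the pre-branch separation and transversality conditions, then localization of the first sign-changing grid cell inside the branch neighbourhood, the intermediate value theorem, and a final triangle inequality with $\tau_{\mathrm{root}}$. The only difference is minor: the paper fixes the sign regions on the wider window $d_\Xi=2\Xi^{(k)}/\mu$ (where $|M_\theta^{(k)}|\ge\Xi^{(k)}$) and then obtains the $\Xi^{(k)}/\mu$ bound by applying transversality at the learned root, $\mu|t_\theta^{\circ,(k)}-t_c^\star|\le|m^{(k)}(\alpha,t_\theta^{\circ,(k)})|\le\Xi^{(k)}$, whereas you use the window $\Xi^{(k)}/\mu$ directly and read the bound off the strict sign exclusion, which also shows that the factor $2$ in Eq.~\eqref{eq:cct-small-error-condition} is not needed for this lemma.
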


\begin{proof}
Fix arbitrary \(\alpha\in\mathcal A_0\), and write
\(
t_c^\star
=
t_c^\star(\alpha)
=
\mathrm{CCT}_k(\alpha),
\;
d_\Xi
=
\frac{2\Xi^{(k)}}{\mu}.
\)
We first consider \(\Xi^{(k)}>0\). The zero-error case follows by the
same argument, or by taking
\(\Xi^{(k)}\downarrow0\).

For
\(
t_c\in[t_{c,\min},t_c^\star-\rho],
\)
the pre-branch separation condition and
Lemma~\ref{lem:trajectory-margin-control} give
\[
M_\theta^{(k)}(\alpha,t_c)
\leq
m^{(k)}(\alpha,t_c)+\Xi^{(k)}
\leq
-\gamma+\Xi^{(k)}
<
0.
\]
Thus, the learned margin cannot introduce a spurious earlier crossing
outside the local hard-CCT neighborhood.

For
\(
t_c\in
[t_c^\star-\rho,t_c^\star-d_\Xi],
\)
the transversality condition gives
\[
m^{(k)}(\alpha,t_c)
\leq
-\mu(t_c^\star-t_c)
\leq
-\mu d_\Xi
=
-2\Xi^{(k)}.
\]
Consequently,
\begin{equation}
M_\theta^{(k)}(\alpha,t_c)
\leq
-\Xi^{(k)}
<
0.
\label{eq:learned-margin-left-sign}
\end{equation}
Combining these two regions yields
\[
M_\theta^{(k)}(\alpha,t_c)<0,
\qquad
t_c\leq t_c^\star-d_\Xi.
\]

Similarly, for
\(
t_c\in
[t_c^\star+d_\Xi,t_c^\star+\rho],
\)
transversality gives
\[
m^{(k)}(\alpha,t_c)
\geq
\mu(t_c-t_c^\star)
\geq
2\Xi^{(k)}.
\]
Hence
\begin{equation}
M_\theta^{(k)}(\alpha,t_c)
\geq
\Xi^{(k)}
>
0.
\label{eq:learned-margin-right-sign}
\end{equation}

By Eq.~\eqref{eq:cct-uniform-interiority} and
\eqref{eq:cct-small-error-condition},
\[
d_\Xi+h_c<\rho.
\]
Therefore,
\[
[t_c^\star+d_\Xi,\,
 t_c^\star+d_\Xi+h_c]
\subset
[t_c^\star,t_c^\star+\rho]
\subseteq
\mathcal T.
\]
Because the clearing-time grid spans \(\mathcal T\) with maximum
spacing \(h_c\), there exists a grid point \(t_{c,j}\) satisfying
\[
t_c^\star+d_\Xi
\leq
t_{c,j}
\leq
t_c^\star+d_\Xi+h_c.
\]
Equation~\eqref{eq:learned-margin-right-sign} gives
\[
M_\theta^{(k)}(\alpha,t_{c,j})>0.
\]

Every grid point at or to the left of
\(t_c^\star-d_\Xi\) has negative learned margin, whereas a positive
grid value occurs no later than
\(t_c^\star+d_\Xi+h_c\). The learned grid margins therefore possess a
first negative-to-nonnegative cell belonging to the initial hard-CCT
branch rather than to a later crossing or re-stabilization branch.

Let this first cell be
\(
[t_{c,r},t_{c,r+1}].
\)
The preceding sign arguments imply
\[
t_{c,r}
\geq
t_c^\star-d_\Xi-h_c,
\qquad
t_{c,r+1}
\leq
t_c^\star+d_\Xi+h_c.
\]
Thus, every point in this cell lies in
\[
[t_c^\star-(d_\Xi+h_c),\,
 t_c^\star+(d_\Xi+h_c)]
\subset
(t_c^\star-\rho,t_c^\star+\rho).
\]

The learned sampled-smooth margin is continuous in \(t_c\).
Since its values at the two cell endpoints are negative and
nonnegative, respectively, the intermediate value theorem gives a
root
\(
t_\theta^{\circ,(k)}(\alpha)
\in
[t_{c,r},t_{c,r+1}]
\)
satisfying
\[
M_\theta^{(k)}
\left(
\alpha,t_\theta^{\circ,(k)}(\alpha)
\right)
=
0.
\]
Because this root lies in the transversality neighborhood,
Eq.~\eqref{eq:cct-branch-regularity} implies
\begin{equation}
\mu
\left|
t_\theta^{\circ,(k)}(\alpha)-t_c^\star
\right|
\leq
\left|
m^{(k)}
\left(
\alpha,t_\theta^{\circ,(k)}(\alpha)
\right)
\right|.
\label{eq:root-transversality-bound}
\end{equation}
At the learned root,
Lemma~\ref{lem:trajectory-margin-control} gives
\begin{equation}
\begin{aligned}
|
m^{(k)}
(
&\alpha,t_\theta^{\circ,(k)}(\alpha)
)
|
\\&=
|
m^{(k)}
(
\alpha,t_\theta^{\circ,(k)}(\alpha)
)
-
M_\theta^{(k)}
\left(
\alpha,t_\theta^{\circ,(k)}(\alpha)
\right)
|
\nonumber
\leq
\Xi^{(k)}.
\end{aligned}\label{eq:hard-margin-at-learned-root}
\end{equation}
Combining
Eqs.~\eqref{eq:root-transversality-bound}  with the preceding margin estimate proves
Eq.~\eqref{eq:exact-learned-root-control}.

Finally,  the triangle inequality
gives
\begin{align*}
|
\widehat{\mathrm{CCT}}_\theta^{(k)}(\alpha)
&-
t_c^\star
|
\\&\leq
\left|
\widehat{\mathrm{CCT}}_\theta^{(k)}(\alpha)
-
t_\theta^{\circ,(k)}(\alpha)
\right|
+
\left|
t_\theta^{\circ,(k)}(\alpha)-t_c^\star
\right|
\\
&\leq
\tau_{\mathrm{root}}
+
\frac{\Xi^{(k)}}{\mu}.
\end{align*}
All constants are uniform over
\(\alpha\in\mathcal A_0\). Taking the supremum proves
Eq.~\eqref{eq:appendix-cct-perturbation}.
\end{proof}

\begin{proof}[Proof of the uniform residual-to-hard-CCT control theorem]
 Lemma
\ref{lem:trajectory-margin-control} gives the uniform hard-margin
perturbation
\[
\sup_{\substack{
\alpha\in\mathcal A_0\\
t_c\in\mathcal T
}}
\left|
M_\theta^{(k)}(\alpha,t_c)
-
m^{(k)}(\alpha,t_c)
\right|
\leq
\Xi^{(k)}.
\]
Under Eq.~\eqref{eq:cct-small-error-condition},
Lemma~\ref{lem:local-cct-perturbation} preserves the initial
hard-CCT branch and gives
\[
\sup_{\alpha\in\mathcal A_0}
\left|
\widehat{\mathrm{CCT}}_\theta^{(k)}(\alpha)
-
\mathrm{CCT}_k(\alpha)
\right|
\leq
\frac{\Xi^{(k)}}{\mu}
+
\tau_{\mathrm{root}}.
\]
\end{proof}

The clearing-time spacing \(h_c\) appears only in the sufficient
condition for locating the correct first-crossing branch. Once that
branch has been bracketed, the final numerical contribution is
controlled by \(\tau_{\mathrm{root}}\), rather than by \(h_c\).
Exact pre-fault representation and exact state chaining eliminate
separate initial-state, pre-fault, and state-interface defect terms.
The local Taylor derivatives do not enter the bound as separate error
terms; their contribution is mediated through the phase-wise
approximation and residual errors.

\begin{corollary}[Implicit sensitivity of the learned smooth boundary]
\label{cor:implicit-cct-sensitivity}
Let \(t_\theta^{\circ,(k)}(\alpha)\) be a finite, uncensored zero of
the learned sampled-smooth margin on its first CCT branch:
\[
M_\theta^{(k)}
\left(
\alpha,t_\theta^{\circ,(k)}(\alpha)
\right)
=
0.
\]
Suppose that \(M_\theta^{(k)}\) is locally \(C^1\) near this point and
that
\[
\partial_{t_c}M_\theta^{(k)}
\left(
\alpha,t_\theta^{\circ,(k)}(\alpha)
\right)
\neq0.
\]
Then the learned sampled-smooth CCT branch is locally differentiable
and satisfies
\begin{equation}
\frac{
d t_\theta^{\circ,(k)}
}{
d\alpha
}
=
-
\frac{
\partial_\alpha M_\theta^{(k)}(\alpha,t_c)
}{
\partial_{t_c}M_\theta^{(k)}(\alpha,t_c)
}
\bigg|_{
t_c=t_\theta^{\circ,(k)}(\alpha)
}.
\label{eq:appendix-cct-sensitivity}
\end{equation}
\end{corollary}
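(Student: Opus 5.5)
This is a direct application of the implicit-function theorem to the scalar equation \(F(\alpha,t_c)=M_\theta^{(k)}(\alpha,t_c)=0\) in the two real variables \((\alpha,t_c)\).

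\textbf{Step 1: set up the IFT hypotheses.} By assumption there is an open neighbourhood \(U\) of \((\alpha_0,t_0)\), with \(t_0=t_\theta^{\circ,(k)}(\alpha_0)\), on which \(F\) is \(C^1\). As noted before Lemma~\ref{lem:local-cct-perturbation}, the learned margin is locally \(C^1\) away from trajectory-time samples that coincide with an event. So the hypothesis amounts to requiring that no sample \(t_\ell\in\mathcal T_h\) equals \(t_0\). If one wanted to derive this rather than assume it, one would use that \(t_\ell\ne t_0\) for all \(\ell\) and that \(\mathcal T_h\) is finite, which gives a whole interval of \(t_c\) free of samples. On that interval each sample \(t_\ell\) stays in a fixed phase. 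The map \((\alpha,t_c)\mapsto\widehat\delta^{(k)}_\theta(t_\ell;\alpha,t_c)\) is then a composition of smooth network maps, the smooth gate \(g_q\), the Taylor coefficients (smooth in \(x_{\mathrm{bd}}\) and \(\alpha\) because \(P_e\) is trigonometric in \(\delta\) and the operators are smooth in \(\alpha\)), and the hard-chained boundary state \(\widehat x^{(k,2)}_\theta(t_c;\alpha)\). Finally, \(|\cdot|_\varepsilon\) and \(\operatorname{smax}_\beta\) are \(C^\infty\). Together with \(F(\alpha_0,t_0)=0\) and \(\partial_{t_c}F(\alpha_0,t_0)\ne0\), all IFT hypotheses hold.

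\textbf{Step 2: apply the IFT.} It yields \(\eta,\kappa>0\) and a unique \(C^1\) function \(\varphi:(\alpha_0-\eta,\alpha_0+\eta)\to(t_0-\kappa,t_0+\kappa)\) with \(\varphi(\alpha_0)=t_0\) and \(F(\alpha,\varphi(\alpha))=0\). Differentiating the identity \(F(\alpha,\varphi(\alpha))\equiv0\) by the chain rule gives \(\partial_\alpha F+\partial_{t_c}F\,\varphi'=0\). Dividing by the nonzero \(\partial_{t_c}F\), which stays nonzero near the point by continuity, gives Eq.~\eqref{eq:appendix-cct-sensitivity}.

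\textbf{Step 3: identify \(\varphi\) with the learned first-branch CCT.} This is the main obstacle. The IFT produces a local zero curve, but \(t_\theta^{\circ,(k)}(\alpha)\) is defined as the selected zero of the first sign-changing bracket, and that selection must persist for nearby \(\alpha\).

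I would first note that, by uniqueness in the IFT box, any zero of \(F(\alpha,\cdot)\) in \((t_0-\kappa,t_0+\kappa)\) equals \(\varphi(\alpha)\). Second, I would argue that the first-crossing selection is locally stable. Since \(t_0\) is the first zero on the stable branch, \(F(\alpha_0,t_c)<0\) for \(t_c\in[t_{c,\min},t_0-\kappa]\). This is a compact set, and joint continuity of \(F\) keeps it negative for \(\alpha\) close to \(\alpha_0\). The transverse sign change at \(t_0\) ensures a zero near \(t_0\) exists. Hence for \(\alpha\) near \(\alpha_0\) the first zero lies in the IFT box and equals \(\varphi(\alpha)\).

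When the statement is read locally, as "the branch through this zero," this identification is immediate. If uniformity of the selected bracket is questioned, the bracket-selection subtlety can be handled by this compactness argument, and I would rely on it. This shows \(t_\theta^{\circ,(k)}=\varphi\) locally, so the branch is differentiable with the stated derivative.
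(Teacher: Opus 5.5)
Your proposal is correct and matches the paper's proof. The paper applies the implicit function theorem to $M_\theta^{(k)}(\alpha,t_c)=0$ using $\partial_{t_c}M_\theta^{(k)}\neq0$, then differentiates $M_\theta^{(k)}(\alpha,t_\theta^{\circ,(k)}(\alpha))=0$ by the chain rule, which is exactly your Steps 1--2. Your Step 3 goes beyond the paper, which tacitly reads ``the branch'' as the local implicit-function curve through the given zero. Your compactness argument for persistence of the first crossing is a sound supplement, provided $t_\theta^{\circ,(k)}(\alpha_0)$ is genuinely the first zero of $M_\theta^{(k)}(\alpha_0,\cdot)$ (the grid-bracket selection does not by itself guarantee this), but it is not needed for the stated derivative formula.
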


\begin{proof}
The nonzero clearing-time derivative allows the implicit function
theorem to be applied to
\[
M_\theta^{(k)}
\left(
\alpha,t_\theta^{\circ,(k)}(\alpha)
\right)
=
0.
\]
Differentiating with respect to \(\alpha\) gives
\[
\partial_\alpha M_\theta^{(k)}
+
\partial_{t_c}M_\theta^{(k)}
\frac{
d t_\theta^{\circ,(k)}
}{
d\alpha
}
=
0,
\]
which proves Eq.~\eqref{eq:appendix-cct-sensitivity}.
\end{proof}

The corollary concerns the exact zero of the learned sampled-smooth
margin. In numerical evaluation, derivatives are evaluated at the
root-refined approximation
\(\widehat{\mathrm{CCT}}_\theta^{(k)}\), whose distance from
\(t_\theta^{\circ,(k)}\) is bounded by
\(\tau_{\mathrm{root}}\). Derivatives are computed through the complete
computational graph and only where the local phase selection is
\(C^1\). Censored points, poorly conditioned roots, and event-grid
coincidences are excluded. The corollary does not by itself bound the
error relative to the sensitivity of either the reduced-model hard
CCT or the full-network DAE boundary.

\section{Boundary Evaluation and DAE Numerical Reliability}
\label{app:numerical-reliability}

\paragraph{Root-refined boundary protocol.}

All learned and reduced-ODE CCT boundaries use a common first-crossing
extraction protocol. For each fixed severity \(\alpha\), the learned
margin is first evaluated on the clearing-time grid
\[
\mathcal T_{h_c}
=
\left\{
t_{c,0},\ldots,t_{c,J}
\right\},
\qquad
h_c=10^{-2}\,\mathrm{s}.
\]
We select the first cell satisfying
\[
M_\theta^{(k)}
\left(
\alpha,t_{c,j}
\right)
<0,
\qquad
M_\theta^{(k)}
\left(
\alpha,t_{c,j+1}
\right)
\geq0.
\]
The coarse grid is used only to identify the initial
stable-to-unstable transition. Within the selected bracket, the
learned margin is recomputed at continuous clearing-time queries, and
Brent refinement returns
\(\widehat{\mathrm{CCT}}_\theta^{(k)}(\alpha)\) with absolute
tolerance
\(
\tau_{\mathrm{root}}
=
10^{-5}\,\mathrm{s}
\)
and relative tolerance \(10^{-12}\). Consequently, \(h_c\) controls
successful branch localization but does not determine the final
clearing-time resolution.

The reduced-ODE hard CCT is extracted independently using the same
first-branch protocol. Its hard margin is evaluated from trajectories
with temporal spacing
\(
h_t=10^{-3}\,\mathrm{s},
\)
ODE tolerances
\(\mathrm{rtol}=10^{-9}\) and
\(\mathrm{atol}=10^{-11}\), and an independently refined
stable-to-unstable root. Thus, the primary reduced-model CCT error does
not compare two boundaries interpolated from the same clearing-time
grid.

At a finite and locally transverse exact learned root, the
implicit-function theorem gives
\[
\frac{
d t_\theta^{\circ,(k)}
}{
d\alpha
}
=
-
\frac{
\partial_\alpha M_\theta^{(k)}(\alpha,t_c)
}{
\partial_{t_c}M_\theta^{(k)}(\alpha,t_c)
}
\bigg|_{
t_c=t_\theta^{\circ,(k)}(\alpha)
}.
\]
In numerical evaluation, the derivatives are evaluated at the
root-refined approximation
\(\widehat{\mathrm{CCT}}_\theta^{(k)}(\alpha)\), for which
\[
M_\theta^{(k)}
\left(
\alpha,
\widehat{\mathrm{CCT}}_\theta^{(k)}(\alpha)
\right)
\approx0.
\]
The derivatives are obtained through the complete deployed
computational graph, with Taylor-coefficient detachment disabled.
The corresponding finite-difference diagnostic uses independently
root-refined learned CCT values at adjacent severity nodes:
\[
D_{\mathrm{FD}}(\alpha_i)
=
\frac{
\widehat{\mathrm{CCT}}_\theta^{(k)}(\alpha_{i+1})
-
\widehat{\mathrm{CCT}}_\theta^{(k)}(\alpha_{i-1})
}{
\alpha_{i+1}-\alpha_{i-1}
}.
\]
Rows with censored neighboring boundaries, an ill-conditioned
\(\partial_{t_c}M_\theta^{(k)}\), or a clearing event coinciding with
the trajectory-time grid are excluded from the sensitivity
comparison.

\paragraph{Margin-approximation diagnostics.}

We separately quantify three boundary discrepancies induced by margin
approximation and surrogate error:
\[
E_{\mathrm{sm}}(\beta)
=
\operatorname{MAE}
\left(
\mathrm{CCT}_{\mathrm{ODE}}^{\mathrm{smooth}}(\beta),
\mathrm{CCT}_{\mathrm{ODE}}^{\mathrm{hard}}
\right),
\]
\[
E_{\mathrm{sur}}(\beta)
=
\operatorname{MAE}
\left(
\widehat{\mathrm{CCT}}_\theta^{\mathrm{smooth}}(\beta),
\mathrm{CCT}_{\mathrm{ODE}}^{\mathrm{smooth}}(\beta)
\right),
\]
and
\[
E_{\mathrm{end}}(\beta)
=
\operatorname{MAE}
\left(
\widehat{\mathrm{CCT}}_\theta^{\mathrm{smooth}}(\beta),
\mathrm{CCT}_{\mathrm{ODE}}^{\mathrm{hard}}
\right).
\]
The first isolates smooth-margin approximation, the second isolates
the neural-surrogate discrepancy under the same smooth margin, and
the third measures the end-to-end reduced-model boundary error. These
quantities are related by the triangle inequality but are not assumed
to form an additive error decomposition.

\begin{figure}[t]
\centering
\includegraphics[width=\linewidth]
{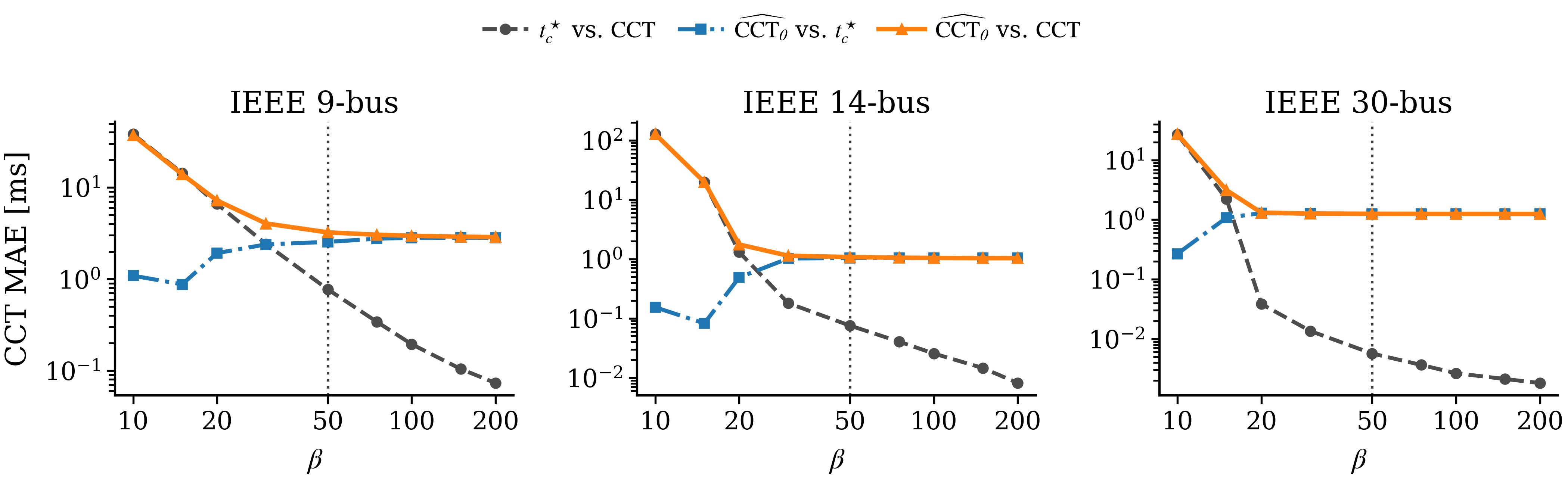}
\caption{Sensitivity of root-refined CCT MAE to the smooth-maximum
parameter \(\beta\) on the IEEE 9-, 14-, and 30-bus systems. The gray
curve reports the reduced-ODE smooth-to-hard discrepancy
\(E_{\mathrm{sm}}\); the blue curve reports the ES-PINN-to-ODE
surrogate discrepancy \(E_{\mathrm{sur}}\) under the same smooth
margin; and the red curve reports the end-to-end ES-PINN smooth-to-ODE
hard error \(E_{\mathrm{end}}\). The vertical dotted line marks the
default \(\beta=50\). Each point is evaluated on the finite,
uncensored severity rows of the corresponding system.}
\label{fig:beta-sensitivity}
\end{figure}

Figure~\ref{fig:beta-sensitivity} shows that small values of \(\beta\)
produce substantial smooth-maximum bias, especially in the IEEE
14-bus system. At the default \(\beta=50\), the ODE smooth-to-hard CCT
MAEs are
\(0.771\,\mathrm{ms}\),
\(0.075\,\mathrm{ms}\), and
\(0.006\,\mathrm{ms}\)
for the IEEE 9-, 14-, and 30-bus systems, respectively. The
corresponding end-to-end ES-PINN smooth-to-ODE hard errors are
\(3.232\,\mathrm{ms}\),
\(1.077\,\mathrm{ms}\), and
\(0.326\,\mathrm{ms}\).
Increasing \(\beta\) beyond \(50\) further reduces the deterministic
smooth-to-hard discrepancy but changes the end-to-end error only
modestly, indicating that the remaining error is dominated by the
learned surrogate rather than margin smoothing.

We also tested the temporal sampling used by the high-resolution ODE
hard-margin extractor. Across the \(23\) finite Case-30 boundary rows,
changing the default \(h_t=1\,\mathrm{ms}\) to
\(0.5\,\mathrm{ms}\) or \(2\,\mathrm{ms}\) changed the root-refined
hard CCT by at most
\(1.6\times10^{-6}\,\mathrm{ms}\).
The temporal-sampling contribution is therefore negligible at the
reported precision, and no separate \(h_t\)-convergence table is
required.

\paragraph{Full-network DAE reliability.}

The full-network reference is computed using a fixed-step implicit
trapezoidal DAE solver. At each time step, the coupled nonlinear
equations are solved by Newton iteration with tolerance \(10^{-8}\)
and a maximum of \(15\) iterations. Fault application and clearing
are aligned with the integration grid, and an algebraic correction is
applied immediately after network switching so that the post-event
state satisfies the cleared-network algebraic constraints. Hard CCTs
are extracted by an adaptive first-crossing search over clearing time.
A severity is labeled right-censored when no stable-to-unstable
transition occurs within the prescribed clearing-time interval.
Rollouts with nonfinite states, singular Newton systems, or failed
nonlinear convergence are recorded as solver failures rather than
interpreted as stability outcomes.

The targeted DAE sweep evaluates \(21\) severity values per system.
It yields \(5\), \(4\), and \(12\) finite, uncensored CCTs for the IEEE
9-, 14-, and 30-bus systems, respectively. The remaining \(16\),
\(17\), and \(9\) severities are right-censored, with no unresolved
solver failures in the final sweep.
Table~\ref{tab:dae_timestep_convergence} examines time-step
sensitivity at one finite interior severity per system.

\begin{table}[t]
\centering
\caption{Full-network DAE time-step refinement at one finite,
uncensored interior severity per system. Each hard CCT is extracted
using event-grid-aligned adaptive DAE rollouts. The final column
reports the maximum difference among the three tested step sizes
relative to the finest \(2.5\)-ms result.}
\label{tab:dae_timestep_convergence}
\scriptsize
\setlength{\tabcolsep}{2.7pt}
\resizebox{\columnwidth}{!}{%
\begin{tabular}{lccccc}
\toprule
System & \(\alpha\)
& CCT, 10 ms (s)
& CCT, 5 ms (s)
& CCT, 2.5 ms (s)
& Max. diff. (ms) \\
\midrule
IEEE 9-bus  & 0.90 & 0.66021 & 0.66509 & 0.66754 & 7.33 \\
IEEE 14-bus & 0.90 & 1.03020 & 1.03015 & 1.03011 & 0.09 \\
IEEE 30-bus & 0.70 & 0.68034 & 0.68516 & 0.68757 & 7.23 \\
\bottomrule
\end{tabular}%
}
\end{table}

The IEEE 14-bus boundary is effectively insensitive to the tested
time steps, changing by only \(0.09\,\mathrm{ms}\). For the IEEE 9-
and 30-bus systems, the successive CCT changes decrease by
approximately a factor of two when the time step is halved, indicating
consistent convergence under time-step refinement. Nevertheless, the
corresponding \(10\)-ms and \(2.5\)-ms results differ by
\(7.33\,\mathrm{ms}\) and \(7.23\,\mathrm{ms}\), respectively.
We therefore use the \(2.5\)-ms DAE results for the quantitative
full-network comparisons and interpret them as finite-resolution
external validation rather than numerically exact physical CCTs.

\section{Additional Experimental Results}
\label{app:additional-results}

\paragraph{Additional Case-30 diagnostics.}

Tables~\ref{tab:case30_restore_matched} and
\ref{tab:case30_restore_extrapolation} provide two supplementary views
of the IEEE 30-bus restore-clearing benchmark. The matched-grid results
measure reconstruction on parameter pairs used during training and are
included only as diagnostics of optimization and representation
capacity. The mild-extrapolation results evaluate severities beyond
the training interval while retaining clearing times within the
training range.

For a rotor-angle trajectory, define the instantaneous maximum
pairwise separation as
\[
s(t)
=
\max_{i<j}
\left|
\delta_i(t)-\delta_j(t)
\right|.
\]
Sep.\ MAE averages
\(
|\widehat{s}_\theta(t)-s(t)|
\)
over the evaluated parameter pairs and trajectory-time samples. For
the supplementary label-based diagnostic, mIoU is the mean of the
stable- and unstable-class intersection-over-union values induced by
the sampled-smooth surrogate and reduced-ODE margins.

\begin{table}[t]
\centering
\caption{Matched-grid diagnostics for the IEEE 30-bus
restore-clearing benchmark. Values are mean \(\pm\) standard deviation
over five seeds. Trajectory and separation metrics aggregate all
\(2{,}601\) training-grid parameter pairs, while mIoU compares
sampled-smooth stability labels on the same grid. These results measure
reconstruction on sampled nodes and are not used as the primary
generalization claim. Lower is better except for mIoU.}
\label{tab:case30_restore_matched}
\scriptsize
\setlength{\tabcolsep}{2.0pt}
\resizebox{\columnwidth}{!}{%
\begin{tabular}{@{}lcccc@{}}
\toprule
Metric
& Vanilla PINN
& Enhanced XPINN
& PI-DeepONet
& ES-PINN \\
\midrule

\multicolumn{5}{@{}l}{\textit{Trajectory accuracy}}\\

\(\mathrm{RMSE}_{\delta}\) (rad)
& \((9.42\pm2.92)\times10^{-2}\)
& \((5.05\pm0.37)\times10^{-3}\)
& \((2.57\pm1.17)\times10^{-1}\)
& \(\mathbf{(2.95\pm0.31)\times10^{-3}}\) \\

\(\mathrm{RMSE}_{\omega}\) (p.u.)
& \((1.29\pm0.43)\times10^{-3}\)
& \((6.80\pm0.50)\times10^{-5}\)
& \((2.74\pm0.27)\times10^{-3}\)
& \(\mathbf{(4.10\pm0.40)\times10^{-5}}\) \\

Sep.\ MAE (\(^{\circ}\))
& \(5.02\pm1.96\)
& \(0.203\pm0.013\)
& \(9.97\pm2.85\)
& \(\mathbf{0.130\pm0.016}\) \\

\addlinespace[2pt]
\multicolumn{5}{@{}l}{\textit{Sampled stability-map accuracy}}\\

mIoU
& \(0.9929\pm0.0058\)
& \(\mathbf{1.0000\pm0.0000}\)
& \(0.9986\pm0.0008\)
& \(\mathbf{1.0000\pm0.0000}\) \\

\bottomrule
\end{tabular}%
}
\end{table}

Matched-grid CCT MAE and P95 errors are intentionally omitted. These
near-zero same-grid quantities provide little information about
generalization and do not follow the independently root-refined
hard-CCT protocol used for the primary boundary comparison.

\begin{table}[t]
\centering
\caption{Mild severity-extrapolation accuracy for the IEEE 30-bus
restore-clearing benchmark. Values are mean \(\pm\) standard deviation
over five seeds. Trajectory and separation metrics aggregate
\(1{,}010\) parameter pairs with
\(\alpha\in[1.02,1.20]\) and \(101\) clearing times within the training
clearing-time interval. Hard-CCT errors compare root-refined learned
smooth-margin boundaries with independently root-refined reduced-ODE
hard-margin boundaries using \(1\)-ms trajectory-time sampling on the
finite, uncensored severity rows. The mIoU comparison uses
sampled-smooth stability labels. Lower is better except for mIoU.}
\label{tab:case30_restore_extrapolation}
\scriptsize
\setlength{\tabcolsep}{2.0pt}
\resizebox{\columnwidth}{!}{%
\begin{tabular}{@{}lcccc@{}}
\toprule
Metric
& Vanilla PINN
& Enhanced XPINN
& PI-DeepONet
& ES-PINN \\
\midrule

\multicolumn{5}{@{}l}{\textit{Trajectory accuracy}}\\

\(\mathrm{RMSE}_{\delta}\) (rad)
& \((3.70\pm0.95)\times10^{-1}\)
& \((2.41\pm0.99)\times10^{-1}\)
& \((8.56\pm0.52)\times10^{-1}\)
& \(\mathbf{(1.78\pm0.42)\times10^{-1}}\) \\

\(\mathrm{RMSE}_{\omega}\) (p.u.)
& \((2.99\pm0.65)\times10^{-3}\)
& \((1.72\pm0.29)\times10^{-3}\)
& \((5.45\pm0.16)\times10^{-3}\)
& \(\mathbf{(1.43\pm0.32)\times10^{-3}}\) \\

Sep.\ MAE (\(^{\circ}\))
& \(21.35\pm6.42\)
& \(13.42\pm7.68\)
& \(59.61\pm33.50\)
& \(\mathbf{10.60\pm1.54}\) \\

\addlinespace[2pt]
\multicolumn{5}{@{}l}{\textit{Root-refined boundary accuracy}}\\

Hard-CCT MAE (ms)
& \(14.63\pm4.52\)
& \(5.31\pm1.14\)
& \(12.02\pm1.36\)
& \(\mathbf{1.28\pm0.51}\) \\

Hard-CCT P95 err. (ms)
& \(24.20\pm5.69\)
& \(8.78\pm5.09\)
& \(22.55\pm2.15\)
& \(\mathbf{3.94\pm2.80}\) \\

mIoU
& \(0.9382\pm0.0173\)
& \(0.9943\pm0.0054\)
& \(0.9480\pm0.0044\)
& \(\mathbf{0.9952\pm0.0052}\) \\

\bottomrule
\end{tabular}%
}
\end{table}

The matched-grid results confirm that ES-PINN accurately reconstructs
the sampled parameter nodes without relying on a capacity advantage.
More importantly, the mild-extrapolation results show that its
trajectory and root-refined boundary errors remain lower than those
of the three comparison methods outside the training severity range.

Figure~\ref{fig:case30_restore_full_trajectory} complements the
critical-pair diagnostic  by
displaying the rotor-angle trajectories of all generators for the same
supercritical scenario.

\begin{figure}[t]
\centering
\includegraphics[width=\columnwidth]
{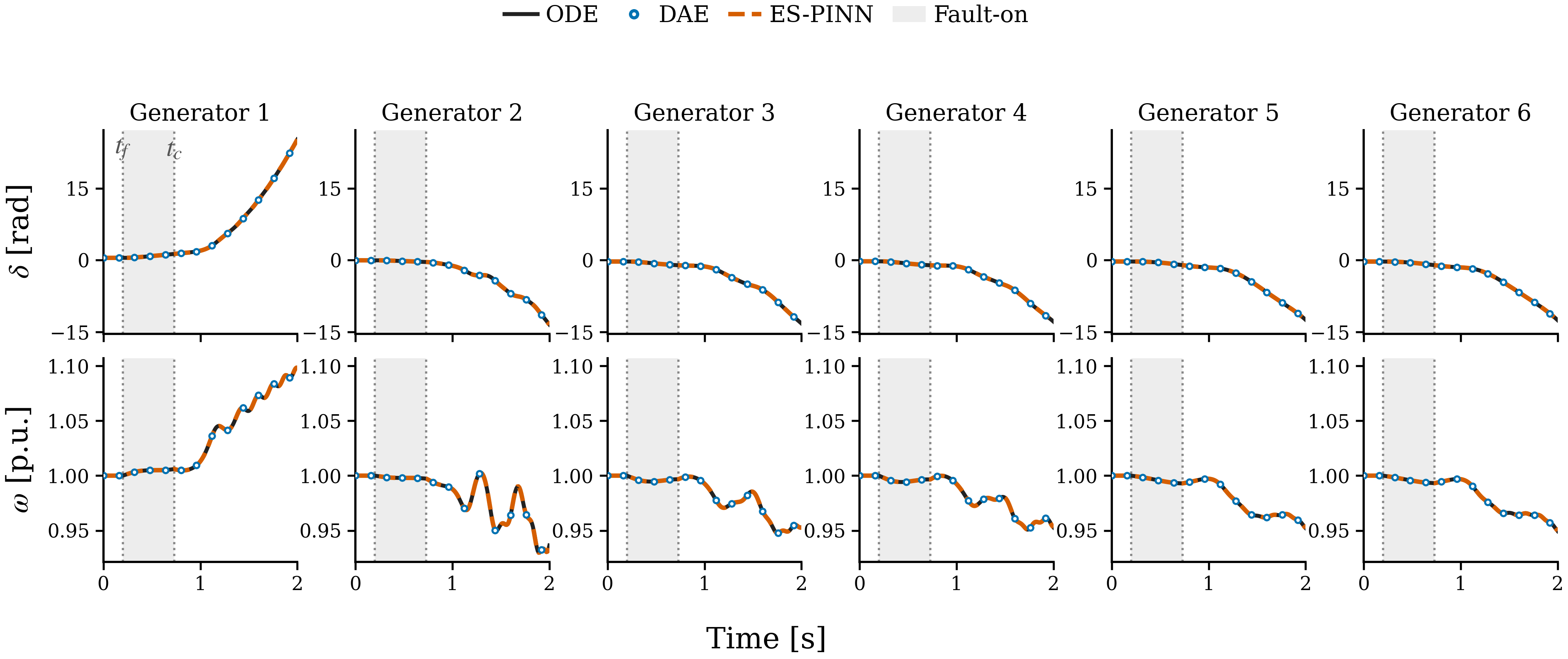}
\caption{Full-generator rotor-angle diagnostic for the IEEE 30-bus
restore-clearing scenario. Reduced
ODE, full-network DAE, and ES-PINN trajectories are shown for every
generator at \(\alpha=0.93\) and
\(t_c=0.73\,\mathrm{s}\). Colors identify the trajectory source, and
shading denotes the fault-on phase.}
\label{fig:case30_restore_full_trajectory}
\end{figure}

\paragraph{Root-based boundary sensitivity.}

We additionally evaluate whether the implicit derivative of each
learned CCT boundary is consistent with finite differences of that
same learned boundary. Following the protocol in
Sec.~\ref{app:numerical-reliability}, the coarse clearing-time grid
first identifies the initial sign-changing cell, after which the
learned margin is root-refined within that cell. At the corresponding
exact learned zero, the implicit derivative is
\[
\frac{
d t_\theta^{\circ,(k)}
}{
d\alpha
}
=
-
\frac{
\partial_\alpha M_\theta^{(k)}(\alpha,t_c)
}{
\partial_{t_c}M_\theta^{(k)}(\alpha,t_c)
}
\bigg|_{
t_c=t_\theta^{\circ,(k)}(\alpha)
}.
\]
In numerical evaluation, this expression is evaluated through the
complete computational graph at the root-refined approximation
\(\widehat{\mathrm{CCT}}_\theta^{(k)}(\alpha)\). Centered finite
differences use independently root-refined learned CCTs at adjacent
severity nodes.

This experiment is an internal differentiability diagnostic rather
than a comparison with the physical CCT derivative. A row is
considered valid only when the center and both neighboring severity
nodes have finite learned roots, the clearing-time margin derivative
is well-conditioned, and the refined clearing event does not coincide
with the trajectory-time grid.

\begin{table}[t]
\centering
\caption{Root-based implicit--finite-difference consistency of learned
CCT sensitivities for the IEEE 30-bus restore-clearing benchmark.
Values are mean \(\pm\) standard deviation over five seeds. The
implicit derivative is evaluated through the complete computational
graph at each root-refined learned zero, while the centered finite
difference uses adjacent root-refined CCTs extracted from the same
learned margin. Each method has \(23\) finite, well-conditioned
boundary points per seed. Lower MAE and higher correlation are better.}
\label{tab:case30_restore_sensitivity_metrics}
\scriptsize
\setlength{\tabcolsep}{2.4pt}
\resizebox{\columnwidth}{!}{%
\begin{tabular}{@{}lcc@{}}
\toprule
Method
& Implicit--FD MAE (\(\mathrm{ms}/\alpha\))
& Correlation \\
\midrule
Vanilla PINN
& \(25.67\pm4.97\)
& \(0.988\pm0.004\) \\

Enhanced XPINN
& \(23.56\pm7.93\)
& \(0.994\pm0.002\) \\

PI-DeepONet
& \(26.16\pm11.71\)
& \(0.976\pm0.005\) \\

ES-PINN
& \(\mathbf{15.24\pm0.88}\)
& \(\mathbf{0.999\pm0.001}\) \\
\bottomrule
\end{tabular}%
}
\end{table}

All four methods exhibit high implicit--finite-difference correlation,
but ES-PINN provides the closest quantitative agreement. Relative to
the strongest comparison method, Enhanced XPINN, it reduces the
implicit--finite-difference MAE by approximately \(35\%\), while
increasing the mean correlation from \(0.994\) to \(0.999\). This
result supports the numerical consistency of the differentiable,
root-refined ES-PINN boundary; it is not, by itself, a comparison with
the derivative of the reduced-model hard CCT or the physical DAE
boundary.

\end{document}